\newtheorem{claim}{Claim}
\title{Q-RUN: Quantum-Inspired Data Re-uploading Networks}
\title{Q-RUN: Quantum-Inspired Data Re-Uploading Networks}
\author{
Wenbo Qiao\textsuperscript{1}, 
Shuaixian Wang\textsuperscript{1}, 
Peng Zhang\textsuperscript{2}\thanks{Corresponding author} , 
Yan Ming\textsuperscript{1}, 
Jiaming Zhao\textsuperscript{1} \\
\textsuperscript{1}School of New Media and Communication, Tianjin University, Tianjin, China \\
\textsuperscript{2}College of Intelligence and Computing, Tianjin University, Tianjin, China
}
\begin{document}
\maketitle

\begin{abstract}
Data re-uploading quantum circuits (DRQC) are a key approach to implementing quantum neural networks and have been shown to outperform classical neural networks in fitting high-frequency functions. However, their practical application is limited by the scalability of current quantum hardware. In this paper, we introduce the mathematical paradigm of DRQC into classical models by proposing a quantum-inspired data re-uploading network (Q-RUN), which retains the Fourier-expressive advantages of quantum models without any quantum hardware.
Experimental results demonstrate that Q-RUN delivers superior performance across both data modeling and predictive modeling tasks. Compared to the fully connected layers and the state-of-the-art neural network layers, Q-RUN reduces model parameters while decreasing error by approximately one to three orders of magnitude on certain tasks. Notably, Q-RUN can serve as a drop-in replacement for standard fully connected layers, improving the performance of a wide range of neural architectures. This work illustrates how principles from quantum machine learning can guide the design of more expressive artificial intelligence.
\end{abstract}

\section{Introduction}
The multilayer perceptron (MLP) remains one of the most fundamental and essential components in artificial intelligence (AI) \cite{lecun2015deep}. As a fundamental building block, MLPs are widely used in constructing a variety of neural models, including convolutional neural networks (CNNs), long short-term memory networks (LSTMs), and large language models, among others. 
However, traditional MLPs face two major limitations, as illustrated in Figure~\ref{fig1}(a). First, the frequency principle indicates that MLPs, especially those using ReLU activation functions, naturally tend to prioritize fitting low-frequency features, which restricts their ability to capture high-frequency components \cite{xu2019frequency,rahaman2019spectral}. Second, the fully connected structure of MLPs causes a parameter explosion and high computational costs, with diminishing returns on scaling as Moore’s Law slows down \cite{thompson2020computational}. These challenges prompt us to consider whether a network structure exists that can effectively model high-frequency information with lower computational cost.

% This insight has spurred a growing body of work on MLP variants that incorporate novel activation functions or network architectures. One promising approach introduces inductive biases related to Fourier series fitting, which enhance the representation of high-frequency signals and have demonstrated superior performance over classical MLPs in specific domains. However, their capacity to express Fourier components typically scales only linearly with model size, meaning that modeling complex tasks still requires a large number of parameters.

\begin{figure}[t]
    \scriptsize
    \centering
    \includegraphics[width=0.6\linewidth]{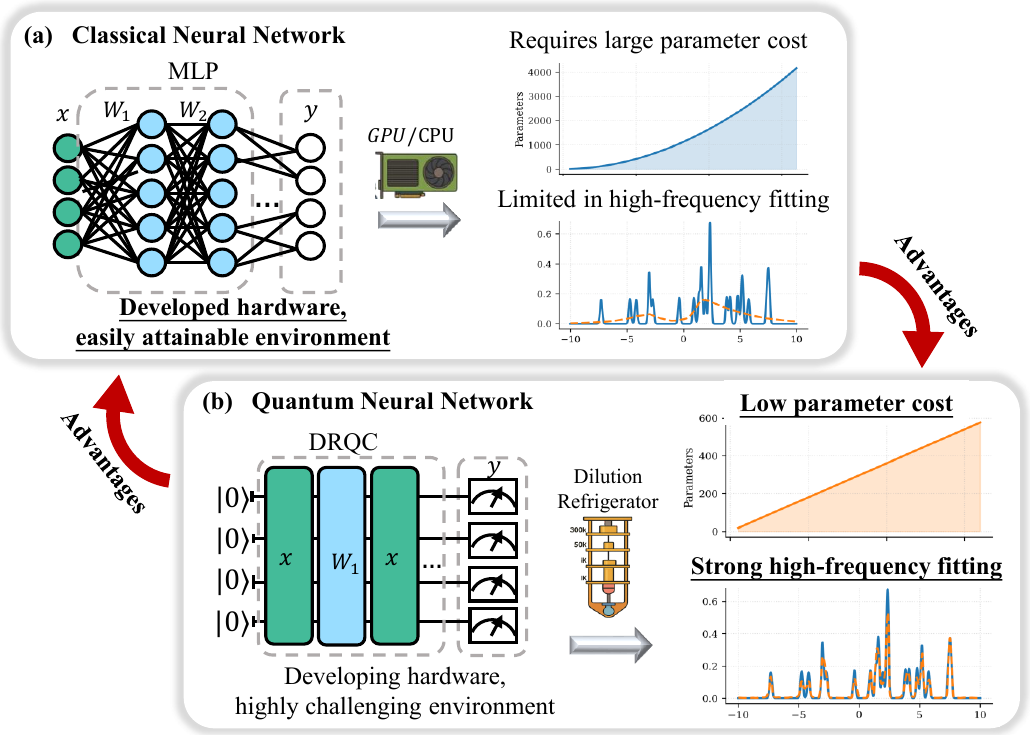}
    \caption{Comparison of classical and quantum neural networks. The underlines highlight the respective strengths of the two computational paradigms.}
        \label{fig1}
\end{figure}

Recently, quantum machine learning (QML) has gained attention for leveraging quantum parallelism and high-dimensional Hilbert spaces to potentially boost the speed or accuracy of classical models \cite{biamonte2017quantum,chen2024hands}. 
Among them, quantum neural networks (QNN) based on the data re-uploading quantum circuit (DRQC) \cite{perez2020data} have demonstrated exponential expressive power in fitting truncated Fourier series \cite{schuld2021effect, yu2022power}. This enables QNNs to efficiently capture high-frequency components of natural continuous signals, such as images and sound waves \cite{zhao2024quantum}, and to perform time series prediction \cite{qiao2025quantum}, with fewer parameters. This suggests that QNN possesses a more lightweight and efficient inductive bias for representing Fourier series.
However, due to the current limitations of noisy intermediate-scale quantum (NISQ) devices, such DRQC remain difficult to scale up on real quantum hardware, as illustrated in Figure~\ref{fig1}(b). Given these observations, a natural question arises:

\begin{center}
\emph{``Can we leverage QML principles to design better AI models within today’s classical frameworks?''}
\end{center}

Fortunately, we observe that the advantage of DRQC lies primarily in modeling capacity rather than computational acceleration. This insight implies that we may leverage the core inductive biases of such quantum models without relying on quantum hardware. In this paper, we propose a quantum-inspired data re-uploading network (Q-RUN) that leverages the Fourier series representation capabilities of DRQC. This approach not only tackles high-frequency fitting but also applies to a wide range of tasks with fewer parameters than traditional fully connected networks, without relying on quantum hardware.
Specifically, we first introduce a version of Q-RUN that strictly follows the principles of quantum computation. It leverages data re-uploading to encode the input features and employs quantum measurement to extract relevant information, thereby incorporating the inductive bias of quantum models that fit Fourier series into the network. Then, to enable more efficient execution of Q-RUN on classical hardware, we relax the strict rules of quantum computation by implementing data re-uploading through feature concatenation and introducing a small element-wise shared MLP to approximate the quantum measurement process.

We demonstrate that Q-RUN outperforms fully connected layers (FCs) and other state-of-the-art network layers in various data modeling tasks (e.g., implicit neural representations, probabilistic density estimation, and molecular energy modeling), achieving improvements of one to three orders of magnitude on certain tasks while using fewer parameters. Moreover, in predictive modeling tasks (e.g., time-series forecasting, language modeling, image recognition, and parameter-efficient fine-tuning of pretrained language models), Q-RUN exhibits plug-and-play compatibility and improves performance across diverse architectures, illustrating the broad applicability of the inductive bias introduced by quantum models.
\textbf{Our contributions are as follows:}
\begin{itemize}
    \item We propose Q-RUN, which captures the unique inductive bias of quantum models, achieving strong capability in modeling Fourier series and effectively fitting high-frequency functions with fewer parameters.
    \item We validate Q-RUN across a wide range of data modeling and predictive modeling tasks, demonstrating strong competitiveness and plug-and-play applicability as a replacement for standard FCs in network architectures.
    \item Q-RUN illustrates a practical approach to harnessing the advantages of QML for classical AI, offering a promising path to exploiting QML principles in the near term.
\end{itemize}

The remainder of this paper is organized as follows. 
In Section~2, we review related work and position our contributions within the existing literature. 
Section~3 provides the necessary preliminaries on quantum machine learning. 
In Section~4, we present the theoretical foundation of Q-RUN together with its practical model architecture. 
Section~5 reports empirical evaluations of Q-RUN across a variety of tasks. 
Finally, Section~6 concludes the paper and outlines future research directions.

\section{Related Work}

Q-RUN is a classical neural network inspired by QNN, situating it at the intersection of classical and quantum models, as illustrated in Figure \ref{frw}. This section reviews related work from both areas.
% Q-RUN 是一个由量子神经网络启发的经典神经层，因此它的相关工作落在经典神经网络和量子神经网络的交叉区间之上。在本节我们将分别介绍这两个方向。

\subsection{Classical Neural Network}

MLPs serve as the backbone of many complex modern architectures. However, the frequency principle in MLPs \cite{xu2019frequency} and the slowdown of Moore’s Law \cite{thompson2020computational} have motivated the search for new foundational architectures.
One promising direction introduces inductive biases related to Fourier series fitting, which improve the model's ability to represent high-frequency signals and have shown superior performance over vanilla MLPs in certain domains \cite{rahaman2019spectral,dong2024fan}. For example, MLPs with sinusoidal activations (SIREN) have been widely used in implicit representations, enabling efficient storage of images and audio, as well as modeling of 3D scenes \cite{sitzmann2020implicit}. 
Another related approach, Random Fourier Features (RFF), first projects input data onto a random Fourier basis before feeding it into the network, thereby enhancing its ability to capture high-frequency patterns \cite{tancik2020fourier}. This approach has shown strong promise in time series forecasting \cite{woo2023learning} and positional encoding for large language models \cite{hua2024fourier}.
However, because their Fourier representation capacity grows only linearly with model size, representing high-dimensional Fourier series demands exponentially more parameters \cite{zhao2024quantum}.

Another MLP variant class uses activation functions suited to specific tasks instead of Fourier bases. For example, piecewise linear neural networks (PWLNNs) use piecewise activation functions to approximate target functions and naturally extend ReLU-based networks, effectively modeling complex nonlinear systems \cite{tao2022piecewise}. Similarly, Kolmogorov–Arnold Networks (KANs) utilize B-spline basis functions combined with extensive tuning strategies to approximate target \cite{liu2024kan}. These methods often use fewer parameters, showing promise in symbolic regression and solving partial differential equations. However, their success is mostly domain-specific, and their broader applicability to AI remains under study.

% 基于各种非线性激活函数的MLP是经典神经网络的最初形态，也是目前各种复杂网络结构的基础。但是频率原理和摩尔定理的放缓都在要求我们寻找新的基础网络层。
%  This insight has spurred a growing body of work on MLP variants that incorporate novel activation functions or network architectures. One promising approach introduces inductive biases related to Fourier series fitting, which enhance the representation of high-frequency signals and have demonstrated superior performance over 朴素 MLPs in specific domains. 利如使用sin激活函数的MLP，该方法被广泛应用于隐式神经表征，能够高效的存储图片、声波，建模3D信号。甚至在生物科学、材料化学上也展现出应用前景。
%  类似的思想还有在数据输入网络之前先将数据映射到一个随机傅里叶序列，然后提高对高频信号的预测。这种方法在时间序列预测、大模型的位置编码上都展现出了应用前景。还有同时sine cos 以及gelu激活函数的方法，该方法可以广泛的提升各类人工智能任务。

%  However, their capacity to express Fourier components typically scales only linearly with model size. 反过来说这 当面对一个 本质上需要建模高维傅里叶序列的complex 
%  tasks，就需要指数级的可学习参数。

% 还有一类MLP的变体它们不使用傅里叶基底，而是通过其它类型的函数或理论作为近似手段。比如 PWLNN models the target function using piecewise activation functions and can be viewed as an extension of ReLU-based MLP. It is adept at fitting complex nonlinear systems...,KAN 也属于这一领域，他使用b样条函数和大量的调参技巧去近似目标函数，这种方法通常需要更少的参数，同时可解释性也更好，在符合回归等问题上比经典MLP表现的更好。但是这些方法的表现主要集中在一些特定领域，对于更广泛的人工智能的作用还需进一步探索。

 % FAN: Other hybrid activation designs, such as those combining sine, cosine, and GELU functions, have also been proposed, showing consistent improvements across a wide range of AI tasks.

\begin{figure}[t]
    \centering
    \includegraphics[width=0.7\linewidth]{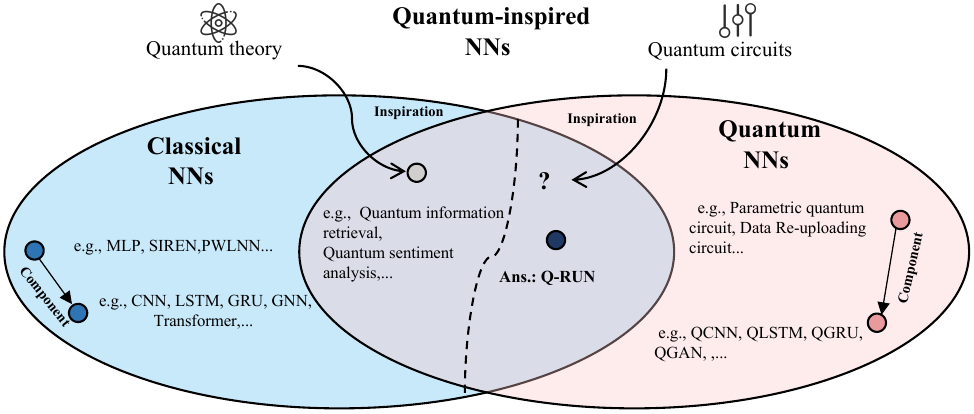}
    \caption{Q-RUN is situated within the related literature on variational quantum circuit–inspired neural networks.}
    \label{frw}
\end{figure}

\subsection{Quantum Neural Network}

% Similar to classical cases, QNNs also serve as foundational building blocks for diverse architectures in QML, such as quantum LSTMs \cite{chen2022quantum}, quantum topic models \cite{qiao2024quantum}, and quantum language models \cite{coecke2020foundations}. 

% Among these, data re-uploading circuits, an implementation of QNNs, have demonstrated strong capabilities in fitting truncated Fourier series \cite{yu2022power,schuld2021effect}. This enables QNNs to efficiently capture high-frequency components of natural continuous signals, such as images and sound waves \cite{zhao2024quantum}, and to perform time series prediction \cite{qiao2025quantum}, with fewer parameters.

% However, due to limitations of the NISQ era, these models are often challenging to deploy on real quantum hardware, and even when executable, noise interference makes it difficult to surpass the latest classical methods.

Similar to classical cases, QNNs also serve as foundational building blocks for diverse architectures in QML \cite{baek2023logarithmic}, such as recurrent neural networks \cite{chen2022quantum,li2024quantum}, graph neural networks \cite{yan2022towards,bai2025aegk}, and convolutional neural networks \cite{cong2019quantum}. These QNN-based models have been applied to reinforcement learning \cite{dong2008quantum}, adversarial learning \cite{lu2020quantum}, generative models \cite{anschuetz2023interpretable}, language modeling \cite{coecke2020foundations}, topic models \cite{qiao2024quantum}, and even fine-tuning large language models \cite{chen2024quanta,liu2024quantum,koike2025quantum}.
Furthermore, by reformulating certain tasks, QML can be applied to feature selection \cite{ferrari2022towards}, as well as to information retrieval and recommender systems \cite{ferrari2024using}.
However, due to limitations of the NISQ era, these models are often challenging to deploy on real quantum hardware, and even when executable, noise interference makes it difficult to surpass the latest classical methods.

A more practical near-term direction involves quantum-inspired neural networks, which apply mathematical principles from quantum mechanics to tackle specific tasks \cite{fan2024quantum,fan2024quantum2} without the stringent requirements of quantum computing, enabling easier implementation on classical hardware. For instance, mathematical tools from quantum theory have been used to simulate quantum circuits \cite{pan2022simulation}. 
In information retrieval, modeling user decisions as quantum collapse captures cognitive behaviors that classical networks struggle with \cite{bruza2015quantum,jiang2020quantum}, and similar ideas have been extended to recommendation systems \cite{han2024quantum,wang2019qpin}.
In language modeling, interpreting word ambiguity through many-body quantum properties improves tasks like question-answer matching \cite{zhang2018end,zhang2018quantum,zhang2022complex}, false information detection \cite{tian2020qsan}, and representing uncertainty in semantics and sentiment within natural language processing \cite{yan2021quantum, li2021quantum, shi2024pretrained,zhang2024quantum,qiao2024quantum2,shokrollahi2023intersectional}. Additionally, in network modeling, quantum-inspired techniques efficiently embed nodes into quantum spaces \cite{xiong2024node2ket}. However, these approaches are typically tailored to specific problems rather than serving as general-purpose methods applicable across diverse AI tasks.

In this paper, we propose Q-RUN, a quantum-inspired neural network with a key distinction: rather than drawing analogies from abstract quantum mechanical principles, Q-RUN is directly inspired by the concrete structure of quantum circuits. Accordingly, our approach lies in the intersection region on the right side of the dashed line in Figure \ref{frw}.

\section{Preliminaries}

Q-RUN relies on quantum computing and DRQC. This section briefly reviews the basics and highlights the theoretical strengths of DRQC.

\textit{Qubits,} the basic units of quantum circuits, can exist in a superposition of the basis states \( |0\rangle \) and \( |1\rangle \) \footnote{Quantum mechanics uses Dirac notation to represent vectors.}, forming the foundation of quantum computational frameworks.
This superposition is mathematically represented as \( |\phi\rangle = \alpha|0\rangle + \beta|1\rangle \), where \( |0\rangle = [1, 0]^\top \) and \( |1\rangle =[0, 1]^\top \) constitute an orthonormal basis. Here, $\alpha$ and $\beta$ are complex numbers that satisfy the normalization condition $|\alpha|^2+|\beta|^2=1$. Similarly, an $N$-qubit quantum circuit corresponds to a quantum state in a $2^N$-dimensional Hilbert space, written as $|\phi\rangle = \sum_{i=1}^{2^N}\alpha_i|x_i\rangle$, where $\sum_i |a_i|^2 = 1$ and $x_i \in \{0,1\}^N$ indicates a multi-bit string. The set $\{|x_i\rangle\}_{i=1}^{2^N}$ forms an orthonormal basis of the Hilbert space, constructed by tensor products \footnote{$\otimes$ denotes tensor product, e.g., $|10\rangle = [0,0,1,0]^\top=[0,1]^\top\otimes[1,0]^\top = |1\rangle \otimes |0\rangle$. } of $|0\rangle$ and $|1\rangle$. 

% The tensor product \( \otimes \) is a fundamental operation in quantum computing, used to combine quantum states or operators. Given two vectors \( \mathbf{a} \in \mathbb{C}^m \) and \( \mathbf{b} \in \mathbb{C}^n \), their tensor product is a vector in \( \mathbb{C}^{mn} \), defined as
% \[
% \mathbf{a} \otimes \mathbf{b} =
% \begin{bmatrix}
% a_1 \mathbf{b} \\
% a_2 \mathbf{b} \\
% \vdots \\
% a_m \mathbf{b}
% \end{bmatrix}.
% \]
% For matrices \( \mathbf{A} \in \mathbb{C}^{m \times n} \) and \( \mathbf{B} \in \mathbb{C}^{p \times q} \), the tensor product \( \mathbf{A} \otimes \mathbf{B} \) is a \( mp \times nq \) matrix:
% \[
% \mathbf{A} \otimes \mathbf{B} =
% \begin{bmatrix}
% a_{11}\mathbf{B} & \cdots & a_{1n}\mathbf{B} \\
% \vdots & \ddots & \vdots \\
% a_{m1}\mathbf{B} & \cdots & a_{mn}\mathbf{B}
% \end{bmatrix}.
% \]
% Tensor products are used to describe joint quantum states and composite system operators.

\textit{Quantum circuits,} built through ordered layers of quantum gates, utilize these gates to manipulate the states of qubits. For instance, the Pauli-$Z$ gate introduces a phase flip, while the CNOT gate creates entanglement between two qubits when applied to appropriate states. In this work, we focus on the Pauli-$Y$ gate, which represents a fundamental single-qubit operation. Its unitary matrix form is
\[
Y = 
\begin{bmatrix}
0 & -i \\
i & 0
\end{bmatrix}.
\]
The eigenvalues of the $Y$ gate are
$
\lambda_{1,2} = \pm 1.
$
These quantum gates are all regarded as unitary matrices, which evolve the qubit state according to
\[
|\phi\rangle' = U|\phi\rangle,
\]
where $U$ denotes the corresponding unitary operator (e.g., the Pauli-$Y$ gate).
Finally, measurements are performed to read out the results $x_i$. Due to the inherent randomness in quantum measurements, we often focus on the expected value of an observable, given by
\[
E(\boldsymbol{O}) = \langle \phi | \boldsymbol{O} | \phi \rangle,
\]
where $\langle \phi |$ is the bra vector, i.e., the conjugate transpose of $|\phi\rangle$, written as $\langle \phi | = (|\phi\rangle)^\dagger$. Here, $\boldsymbol{O}$ is a Hermitian operator (e.g., the Pauli-$Z$ gate).

\subsection{Data Re-uploading Quantum Circuit}

By further introducing rotational quantum gates (e.g., $R_y$), one can construct parameterized quantum circuits. We particularly focus on the $R_y$ gate since its matrix elements are entirely real, which makes it more amenable to implementation on classical devices. The unitary form of the $R_y$ gate is 
\begin{equation}\label{y}
R_y(\theta) = 
\begin{bmatrix}
\cos \tfrac{\theta}{2} & -\sin \tfrac{\theta}{2} \\
\sin \tfrac{\theta}{2} & \cos \tfrac{\theta}{2}
\end{bmatrix}.
\end{equation}
Based on these rotational gates, a parameterized quantum circuit can be constructed using a data-dependent unitary \( U(x) \) and a parameterized unitary \( U(W) \),  
\[
f(x) = \langle \phi |\boldsymbol{O}| \phi \rangle, 
\] 
with $| \phi \rangle = U(W)U(x)| 0 \rangle^{\otimes N}$.
Given a target function, the variational parameters \( W \) can be updated via gradient descent~\cite{mitarai2018quantum}, similar to the training process in neural networks. Therefore, parameterized quantum circuits serve as an implementation of QNN.

\begin{figure}[t]
    \centering
    \includegraphics[width=0.5\linewidth]{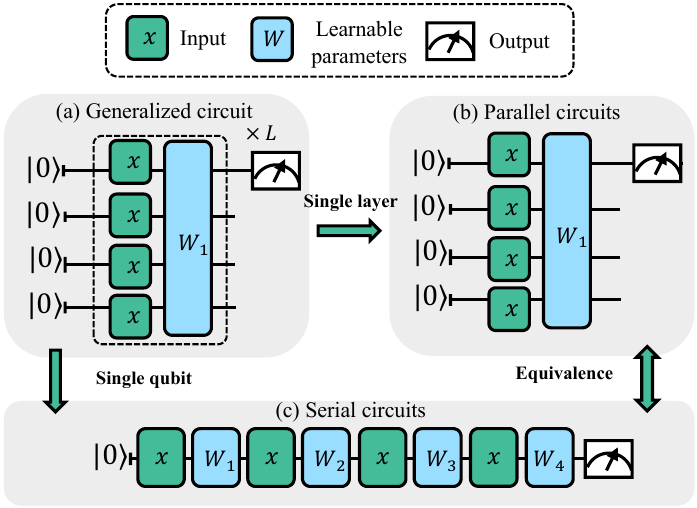}
\caption{Several variants of DRQC. (a) Generalized architecture. (b) Multi-qubit, single-layer encoding scheme. (c) Single-qubit, multi-layer encoding scheme. Schemes (b) and (c) are equivalent in expressive power.}    \label{p1}
\end{figure}

In this work, we focus on the data re-uploading quantum circuit (DRQC), in which encoding layers and parameter layers are alternately stacked, i.e., 
\[
| \phi \rangle = U(W_L) U(x)\ldots U(W_2) U(x) U(W_1) U(x) | 0 \rangle^{\otimes N}.
\]  
The structure of this circuit is illustrated in Figure~\ref{p1}(a). This architecture has an inductive bias capable of expressing Fourier series~\cite{schuld2021effect}.
Particularly, when the input data is a multi-dimensional vector \( \boldsymbol{x} \in \mathbb{R}^d \), and each element of \( \boldsymbol{x} \) is repeatedly encoded into \( N \) qubits and repeatedly stacked for \( L \) layers. This enables the circuit to exhibit:

\begin{claim}
The capability of DRQC to fit Fourier series grows exponentially with the size of the circuit under optimal conditions \cite{zhao2024quantum}. 
\end{claim}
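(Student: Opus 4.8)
The plan is to show that the real-valued output $f(x)=\langle\phi|\boldsymbol{O}|\phi\rangle$ of a DRQC is \emph{exactly} a truncated Fourier series in $x$, and then to count how many distinct frequencies can appear as the circuit grows. Following the encoding analysis of Schuld et al., I would first write each data-encoding unitary as $U(x)=e^{-ix\hat H}$, where $\hat H$ is the Hermitian generator assembled from the $R_y$ rotations (hence real-valued), and diagonalize $\hat H=V\Lambda V^{\dagger}$ with eigenvalues $\{\lambda_k\}$. Substituting this into the alternating product $U(W_L)U(x)\cdots U(W_1)U(x)\,|0\rangle^{\otimes N}$ and expanding the expectation value, every computational path through the eigenbasis picks up a phase $e^{-ix\lambda_{k_\ell}}$ on the ket and $e^{+ix\lambda_{k'_\ell}}$ on the bra at each of the $L$ encoding layers, so that
\[
f(x)=\sum_{\omega\in\Omega}c_{\omega}\,e^{i\omega x},\qquad \omega=\sum_{\ell=1}^{L}\bigl(\lambda_{k_\ell}-\lambda_{k'_\ell}\bigr),
\]
where the coefficients $c_{\omega}$ are fixed by the trainable blocks $W_1,\dots,W_L$, the observable $\boldsymbol{O}$, and the diagonalizing basis $V$.

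Next I would characterize the frequency set $\Omega$. For the multi-qubit scheme in which qubit $j$ is rotated by $R_y(c_j x)$, the generator is $\hat H=\tfrac12\sum_j c_j Y_j$ with eigenvalues $\tfrac12\sum_j s_j c_j$ over sign vectors $s\in\{-1,+1\}^N$; each layer's gap has the form $\sum_j t_j c_j$ with $t_j\in\{-1,0,1\}$, and summing over layers gives
\[
\Omega=\Bigl\{\textstyle\sum_{j=1}^{N}M_j\,c_j \;:\; M_j\in\{-L,\dots,L\}\Bigr\}.
\]
If all scalings are equal, the eigenvalue sums are degenerate and $|\Omega|$ grows only linearly, $\sim 2NL+1$, recovering the familiar linear regime. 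The content of the claim is that this is suboptimal: choosing the scalings as a geometric (mixed-radix) progression, e.g.\ $c_j=(2L+1)^{\,j-1}$, makes every multi-index $\{M_j\}$ yield a distinct value, so that $|\Omega|=(2L+1)^{N}$. The number of independent Fourier modes the circuit can reach therefore grows exponentially in the number of qubits, which is precisely the ``optimal condition'' named in the statement.

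Finally I would address expressivity: it is not enough for the frequencies $\omega$ to be available; the coefficients $c_{\omega}$ must be tunable enough to approximate a prescribed target. Here I would argue that the trainable unitaries $U(W_\ell)$ together with the freedom in choosing $\boldsymbol{O}$ supply enough independent degrees of freedom to place (approximately) arbitrary amplitudes on the accessible modes, invoking the explicit parameterization of \cite{zhao2024quantum} to complete this step.

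The hard part will be this last expressivity step rather than the frequency count. Counting $|\Omega|$ is elementary combinatorics once the Fourier form is in hand, but the coefficients are not free parameters: they are tightly coupled through the unitarity of the circuit, and the raw parameter count of the $W_\ell$ is only polynomial in $NL$ while the number of frequencies is exponential. Proving that the circuit can nonetheless realize a sufficiently rich family of coefficient vectors—rather than an exponentially thin slice of them—is the delicate point, and it is exactly where the ``under optimal conditions'' caveat is doing the real work; a fully rigorous treatment must either exhibit an explicit target-matching construction or give a dimension argument showing that the reachable coefficient set is dense in the relevant subspace.
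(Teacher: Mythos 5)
Your frequency-counting core is correct, but you reach the exponential through a genuinely different mechanism than the paper does. The paper's own justification (the Remark following the claim, leaning on \cite{zhao2024quantum}) keeps the encoding unscaled, so each of the $d$ input dimensions contributes only the integer frequencies $\{-NL,\ldots,NL\}$; the exponential comes entirely from tensoring across dimensions, giving $\Omega=\{-NL,\ldots,NL\}^{d}$ of size $(2NL+1)^{d}$ against only $\mathcal{O}(dNL)$ parameters, i.e.\ exponential in the input dimension $d$ with no optimality assumption on the encoding itself. You instead work with a scalar input and insert non-uniform scalings $c_j=(2L+1)^{j-1}$ into the $R_y$ encodings, making the accumulated eigenvalue gaps $\sum_j M_j c_j$ non-degenerate so that $|\Omega|=(2L+1)^{N}$, exponential in the qubit number $N$. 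That construction is correct (it is the standard mixed-radix/exponential-encoding argument), but note it is precisely the mechanism this paper reserves for Q-RUN rather than for the baseline DRQC: the Appendix proof of Claim 2 uses learnable scalings $w_i$ to de-degenerate the eigenvalue differences and obtain $3^{dn}$ frequencies. So per dimension you prove something stronger than the paper asserts here, and you read ``optimal conditions'' as optimally chosen scalings, whereas in the paper the spectrum statement holds unconditionally and the caveat can only concern which coefficients are realizable. Each route buys something: the paper's argument works for plain Pauli encodings but collapses to a linear spectrum of size $2NL+1$ for scalar inputs; yours survives in one dimension but presupposes scaling freedom that the paper's DRQC, as defined, does not include. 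Finally, your closing concern---that the coefficients $c_{\omega}$ are coupled through only polynomially many parameters and so cannot be arbitrary over an exponentially large spectrum---is a real issue, but the paper's treatment (a remark plus a citation) does not resolve it either, so flagging it explicitly leaves you at parity with, if not ahead of, the text.
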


\noindent\textbf{Remark.} This is because the circuit can eventually be derived in the following form:
$
        f(\boldsymbol{x}) = \langle \phi |\boldsymbol{O}| \phi \rangle = \sum_{\bm{\omega}\in\Omega} c_{\bm{\omega}} \mathrm{e}^{\bm{\omega} \cdot \boldsymbol{x}},
$
where the frequency spectrum $\Omega =\{-NL, \ldots, 0,\ldots ,NL\}^{d}$. 
This illustrates the inductive bias of DRQC towards representing Fourier series, thereby facilitating the modeling of high-frequency components. In particular, its expressivity is exponentially more efficient. Because a DRQC can represent a Fourier series with spectrum size \((2NL + 1)^d\) using only \( \mathcal{O}(d N L) \) parameters, while a conventional Fourier series typically requires \( \mathcal{O}((NL)^d) \) parameters.
It is unfortunate, however, that a $d$-dimensional input theoretically requires a circuit with $(d\cdot N)$ qubits. In practice, modern AI systems often involve data with hundreds or thousands of dimensions, yet current quantum computers cannot reliably handle that many qubits.

\section{Methodology}

Given the aforementioned advantages and limitations of quantum models, this section proposes Q-RUN. We first present a rigorous theoretical formulation and analyze its expressive power, demonstrating its unique capability in modeling Fourier series. Subsequently, we introduce a relaxed implementation that enables Q-RUN to run efficiently on classical hardware without requiring quantum devices.

%鉴于上述量子模型的优势与局限，本节将构建一个受Quantum-inspired Data Re-uploading Network，首先给出一个严格的理论实现以及表达能力的分析它展示它独特的建模能力，然后给出一个放松的实现方案使Q-RUN能高效的运行在经典计算机而无需量子硬件

% Given the above limitations, this section proposes a quantum-inspired data re-uploading neural layer (Q-RUN), as shown in Fig. \ref{f3}. This method transfers the Fourier-series-like expressivity of quantum models to classical architectures, enabling more scalable modeling capabilities. Unlike QML, Q-RUN is not constrained by quantum hardware. It also differs from previous quantum-inspired approaches that rely on loose analogies to quantum principles. Instead, Q-RUN is more directly grounded in the fundamental structure and rules of quantum computation, providing a more rigorous theoretical foundation.

\subsection{Theoretical Formulation of Q-RUN}

To incorporate the expressive power of DRQC into neural networks, a natural idea is to replace the fully connected layers with a DRQC-based computation process. 

Let \( \boldsymbol{x} \in \mathbb{R}^d \) be the input vector and \( \boldsymbol{w} \in \mathbb{R}^{n} \) be a learnable parameter vector that applies element-wise scaling across \( n \) repeated uploads. We define the data re-uploading operator as:
\begin{equation}\label{e1}
S(\boldsymbol{x}) := \bigotimes_{i=1}^d \underbrace{Ry(w_{1} x_i) \otimes Ry(w_{2} x_i) \otimes \cdots \otimes Ry(w_{n}x_i)}_{n\ \text{times}}
\end{equation}
where \( w_j \) denotes the \(j\)-th element of \(\boldsymbol{w}\). The single-qubit rotation gate \( R_y \) is defined by Eq.~(\ref{y}). This construction repeatedly encodes each dimension of \( \boldsymbol{x} \) into a subsystem of \( n \) qubits, thereby realizing the core idea of data re-uploading in DRQC. It should be noted that this corresponds to the structure illustrated in Figure~\ref{p1}(b), where a single-layer, multi-qubit circuit is employed to implement the DRQC. The rationale for adopting this design lies in efficiency: both multi-layer, multi-qubit and multi-layer, single-qubit circuits require sequential execution layer by layer, which is inefficient in Q-RUN \cite{wang2025predictive}. In contrast, using a single-layer, multi-qubit structure to extend the number of data re-uploads allows for more parallelization and higher efficiency. This strategy is generally not recommended in QNNs because increasing the number of qubits is much more costly than adding layers \cite{nguyen2022evaluation}. However, in Q-RUN, increasing the number of qubits corresponds to increasing the number of neurons. This operation is relatively inexpensive in neural networks, which makes it particularly suitable for efficient execution in Q-RUN.

Following DRQC paradigm, the qubits first undergo evolution and are subsequently measured. In Q-RUN, we therefore first apply the data re-uploading encoding \( S(\boldsymbol{x}) \) to the initial qubits \( \ket{0}^{\otimes nd} \), followed by unitary evolution \( U \), and finally compute the expectation with respect to an observable \( \boldsymbol{O} \):
\begin{equation}\label{e2}
f(\boldsymbol{x}) = \bra{0}^{\otimes nd} \, S(\boldsymbol{x})^\dagger U(W)^\dagger \boldsymbol{O} U(W)S(\boldsymbol{x}) \ket{0}^{\otimes nd}
\end{equation}
Here, \( W \) is a learnable variational parameter controlling the unitary evolution, whose size grows polynomially with the number of qubits in QML. For ease of reasoning, we equivalently treat the combination of unitary evolution and measurement as a single learnable observable \( \hat{\boldsymbol{O}} \), yielding:
\begin{equation}\label{e3}
f(\boldsymbol{x}) = \bra{0}^{\otimes nd} \, S(\boldsymbol{x})^\dagger \, \hat{\boldsymbol{O}} \, S(\boldsymbol{x}) \ket{0}^{\otimes nd}
\end{equation}

% Following the DRQC paradigm, the qubits will undergo evolution and measurement. 因此在 Q-RUN中，对于一个初始的状态，首先将S(x)作用到初始的量子比特|0》，然后经过U(W)的酉矩阵进行演化，最后以O作为观测量，求Q-RUN输出的期望：
% \begin{equation}\label{e2}
% f(\boldsymbol{x}) := \bra{0}^{\otimes nd} \, S(\boldsymbol{h})^\top \, \boldsymbol{O} \, S(\boldsymbol{h}) \ket{0}^{\otimes nd}
% \end{equation}
% 其中W是可学习的变分参数，用于控制酉矩阵对量子比特的状态进行挑战，受量子计算的规则的影响，它的数量级大概是nq的多项式级别。
% For more convenient reasoning, we equivalently treat the combined evolution and measurement 为一个整体，成为learnable observable.：
% \begin{equation}\label{e2}
% f(\boldsymbol{x}) := \bra{0}^{\otimes nd} \, S(\boldsymbol{h})^\top \, \boldsymbol{O}^hat \, S(\boldsymbol{h}) \ket{0}^{\otimes nd}
% \end{equation}

Building on DRQC expressivity analysis \cite{schuld2021effect,zhao2024quantum}, we prove the rigorous of Q-RUN has strong expressive power for Fourier series.

% \begin{claim}
% Q-RUN can represent a truncated Fourier expansion with up to \( 3^{n d} \) distinct frequency components using only \( O(2^{n d} \cdot r) \) parameters, where \( r \) is the rank of the parameter matrix.
% \end{claim}

% In classical Fourier series parameterizations, representing all \( 3^{n d} \) frequency terms requires \( O(3^{n d}) \) parameters. In contrast, Q-RUN only needs \( O(2^{n d} \cdot r) \). When \( r \) reaches its maximal value (full rank), the symmetric matrix \( M \) can span the entire space of real symmetric matrices, effectively covering all possible Fourier components.

% However, this would eliminate any compression advantage, reducing Q-RUN to a traditional Fourier model. Fortunately, numerous works on low-rank approximations have shown that most real-world problems do not require full spectrum coverage. A suitable low-rank representation is often sufficient to achieve strong fitting and generalization, thanks to the implicit structure and redundancy in most practical data distributions.

%通过可训练参数w和A,Q-RUN 能够表达一个频谱大小为3^dn的多维傅里叶序列，只需要O(2^dn.r)的参数，在最好的情况下。

\begin{claim}\label{c2}
Under optimal conditions, Q-RUN is capable of expressing a $d$-dimensional truncated Fourier series with a frequency spectrum containing up to $3^{dn}$ components.
\end{claim}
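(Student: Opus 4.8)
The plan is to diagonalize the encoding operator and read the Fourier spectrum off directly, following the strategy behind Claim~1. The starting point is that each single-qubit gate $R_y(\theta)$ in Eq.~(\ref{y}) is generated by the Pauli-$Y$ operator, $R_y(\theta) = e^{-i\theta Y/2}$, and since $Y$ has eigenvalues $\pm 1$ (as noted above), we can write $R_y(\theta) = V\,\Lambda(\theta)\,V^\dagger$ with $\Lambda(\theta) = \mathrm{diag}(e^{-i\theta/2}, e^{+i\theta/2})$ and $V$ the fixed, $\boldsymbol{x}$-independent matrix of $Y$-eigenvectors. Applying this to every factor of the tensor product in Eq.~(\ref{e1}), I would pull out a global $V^{\otimes nd}$ on each side, reducing $S(\boldsymbol{x})$ to a diagonal operator indexed by sign configurations $\boldsymbol{s} = (s_{ij})_{i\le d,\, j\le n} \in \{-1,+1\}^{nd}$, where the $(i,j)$-th qubit contributes the phase $e^{i s_{ij} w_j x_i/2}$.

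Next I would substitute the diagonalized $S(\boldsymbol{x})$ into Eq.~(\ref{e3}). Writing $\ket{b} = (V^\dagger)^{\otimes nd}\ket{0}^{\otimes nd}$ and $\tilde{\boldsymbol{O}} = (V^\dagger)^{\otimes nd}\,\hat{\boldsymbol{O}}\,V^{\otimes nd}$, the output becomes a bilinear form over these phases that expands into
\begin{equation}
f(\boldsymbol{x}) = \sum_{\boldsymbol{s}, \boldsymbol{s}'} b_{\boldsymbol{s}}^{*}\,\tilde{O}_{\boldsymbol{s}\boldsymbol{s}'}\,b_{\boldsymbol{s}'}\,\exp\!\Big(\tfrac{i}{2}\sum_{i,j}(s'_{ij}-s_{ij})\,w_j\,x_i\Big).
\end{equation}
The crucial reduction is that $s'_{ij}-s_{ij}\in\{-2,0,2\}$, so setting $\delta_{ij} = (s'_{ij}-s_{ij})/2 \in \{-1,0,+1\}$ turns the exponent into $i\,\boldsymbol{\omega}\cdot\boldsymbol{x}$ with $\omega_i = \sum_{j=1}^n \delta_{ij}\,w_j$. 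This exhibits $f$ as a multivariate trigonometric polynomial $\sum_{\boldsymbol{\omega}\in\Omega} c_{\boldsymbol{\omega}}\,e^{i\boldsymbol{\omega}\cdot\boldsymbol{x}}$, matching the Fourier form in the Remark after Claim~1; Hermiticity of $\hat{\boldsymbol{O}}$ forces $c_{-\boldsymbol{\omega}} = c_{\boldsymbol{\omega}}^{*}$, exactly the condition for $f$ to be real-valued.

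It then remains to count $|\Omega|$ and justify the optimal conditions. For each fixed dimension $i$ the frequency $\omega_i$ ranges over $\{\sum_j \delta_{ij} w_j : \delta_{ij}\in\{-1,0,1\}\}$, a set of at most $3^n$ values; taking the product over the $d$ independent dimensions gives $|\Omega| \le (3^n)^d = 3^{dn}$. Optimality has two parts. First, to realize $3^{dn}$ \emph{distinct} frequencies I would choose the weights $\boldsymbol{w}$ so that the map $(\delta_{i1},\dots,\delta_{in}) \mapsto \sum_j \delta_{ij} w_j$ is injective on $\{-1,0,1\}^n$ — e.g.\ the base-$3$ choice $w_j = 3^{\,j-1}$, or any generic incommensurate weights, removes all collisions. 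Second, to show the coefficients are freely assignable I would count degrees of freedom: $\hat{\boldsymbol{O}}$ is an arbitrary $2^{nd}\times 2^{nd}$ Hermitian matrix with $4^{nd}$ real parameters, comfortably exceeding the $3^{dn}$ target coefficients, so generically the linear map $\tilde{\boldsymbol{O}} \mapsto \{c_{\boldsymbol{\omega}}\}$ is surjective onto the admissible coefficient space. This is precisely the distinction from a fixed-weight circuit, where the spectrum per dimension grows only linearly in $n$; allowing distinct learnable weights $w_j$ is what inflates it to $3^n$ per dimension.

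I expect the main obstacle to be this last surjectivity step rather than the spectral bookkeeping. The subtlety is that many configuration pairs $(\boldsymbol{s},\boldsymbol{s}')$ collapse onto the same $\boldsymbol{\omega}$, so each $c_{\boldsymbol{\omega}}$ is an aggregate $\sum_{(\boldsymbol{s},\boldsymbol{s}'):\,\boldsymbol{\omega}} b_{\boldsymbol{s}}^* \tilde{O}_{\boldsymbol{s}\boldsymbol{s}'} b_{\boldsymbol{s}'}$; one must check that the fixed amplitudes $b_{\boldsymbol{s}}$ are all nonzero and that the aggregation does not entangle distinct coefficients in a rank-deficient way. The cleanest route is to exhibit, for each target $\boldsymbol{\omega}$, one pair $(\boldsymbol{s},\boldsymbol{s}')$ realizing it and to set the corresponding entry of $\tilde{\boldsymbol{O}}$ independently; combined with the injective-weight choice this decouples the coefficients and secures the maximal spectrum of $3^{dn}$ components under optimal conditions.
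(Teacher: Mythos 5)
Your proposal follows essentially the same route as the paper's proof in Appendix~\ref{app1}: diagonalize the $R_y$-based encoding via the $\pm 1$ eigenvalues of Pauli-$Y$, absorb the eigenvector unitaries into the initial state and the observable, read the frequencies off as pairwise differences of spectral vectors so that each input dimension contributes at most $3^n$ values (hence $3^{dn}$ in total), and invoke the learnable weights $\boldsymbol{w}$ to make these frequencies non-degenerate. If anything, your version is more explicit in two places where the paper is terse: you give a concrete injective weight assignment ($w_j = 3^{j-1}$, i.e.\ balanced ternary) where the paper only asserts that a ``well-trained'' $\boldsymbol{w}$ avoids collisions, and you raise the coefficient-realizability question (whether the aggregated $c_{\boldsymbol{\omega}}$, built from the fixed nonzero amplitudes $b_{\boldsymbol{s}}$ and entries of the transformed observable, can be assigned freely), a point the paper's proof does not address at all.
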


\begin{proof}[Sketch of Proof]
Since Eq.~(\ref{e1}) can be diagonalized as \( S(\boldsymbol{x}) = V^\dagger H V \), where \( H \) is diagonal matrix and \( V \) is unitary matrix. The left matrix \( V^\dagger \) can be absorbed into the observable \( \hat{\boldsymbol{O}} \), while the right matrix \( V \) can be merged into the initial state \( \ket{0}^{\otimes nd} \), yielding $\ket{\Gamma} := \sum_{\boldsymbol{j}} \gamma_{\boldsymbol{j}} \ket{\boldsymbol{j}}$.
Using this property, the Eq. (\ref{e3}) can be derived as:
\[
f(\boldsymbol{x}) = \sum_{\boldsymbol{j},\boldsymbol{k}} \gamma_{\boldsymbol{j}}^* \gamma_{\boldsymbol{k}}\hat{\boldsymbol{O}}_{\boldsymbol{j},\boldsymbol{k}} \, e^{i \boldsymbol{x} \cdot (\boldsymbol{\lambda_k} - \boldsymbol{\lambda_j)}},
\]
where \( \boldsymbol{j}, \boldsymbol{k} \in \{2^{n}\}^{d} \) denote multi-indices over a \( d \)-dimensional system with \( n \) qubits per dimension. The corresponding eigenvalues \( \boldsymbol{\lambda}_{\boldsymbol{j}} \in \left\{ \pm \frac{w_{1}}{2}, \ldots, \pm \frac{w_{n}}{2} \right\}^{d} \) are obtained by indexing the diagonal matrix \( H \) with multi-indices. The set of all possible values of the difference \( \boldsymbol{\lambda}_{\boldsymbol{k}} - \boldsymbol{\lambda}_{\boldsymbol{j}} \) determines the frequency spectrum that Q-RUN is capable of expressing.
The introduction of a learnable parameter vector $\boldsymbol{w}$ ensures that the resulting frequencies are non-degenerate, avoiding repeated values in the spectrum. In the ideal case, a well-trained \( \boldsymbol{w} \) can lead to the maximal number of frequency components, satisfying
$
\left| \left\{ \boldsymbol{\lambda}_{\boldsymbol{k}} - \boldsymbol{\lambda}_{\boldsymbol{j}} \right\} \right| = 3^{d n}.
$
For a complete and detailed proof, please refer to Appendix~\ref{app1}.
\end{proof}

This shows that Q-RUN successfully introduces the inductive bias of quantum models toward fitting Fourier series, allowing it to address the challenge of approximating high-frequency functions. On the other hand, due to the inclusion of $\boldsymbol{w}$, Q-RUN can cover a frequency spectrum of up to \(3^{nd}\), which typically requires only a polynomial number of variational parameters $W$. This indicates that Q-RUN is capable of modeling a richer frequency spectrum with fewer parameters. It should be noted that Claim~\ref{c2} can be regarded as an extension of the results in \cite{schuld2021effect} and \cite{zhao2024quantum}. In particular, compared to \cite{schuld2021effect}, Q-RUN further extends the frequency spectrum by introducing the learnable parameter vector, thereby enhancing the Fourier series representation capability of the model. In comparison with \cite{zhao2024quantum}, Q-RUN derives a single-layer, multi-qubit structure whose capacity can be extended simply by increasing the number of qubits. This structure can be naturally embedded within classical neural network architectures.

\subsection{Relaxed Implementation of Q-RUN}

    While Q-RUN is theoretically promising, it involves extensive tensor products (e.g., Eq.~(\ref{e1})) and large matrix multiplications (e.g., Eq.~(\ref{e3})), which become infeasible for high-dimensional tasks on classical hardware due to the exponential growth of the Hilbert space. To make the approach scalable, we relax the strict quantum operations and approximate them using efficient structures with similar computational efficiency to fully connected layers. The complete quantum formulation is left to future quantum hardware advancements. The relaxed version of Q-RUN abandons tensor product operations and adopts element-wise parameter sharing. As illustrated in Figure~\ref{fig2}, it consists of two modules: Data Re-uploading and Element-wise Observable module.

\begin{figure}[t]
    \centering
    \includegraphics[width=0.65\linewidth]{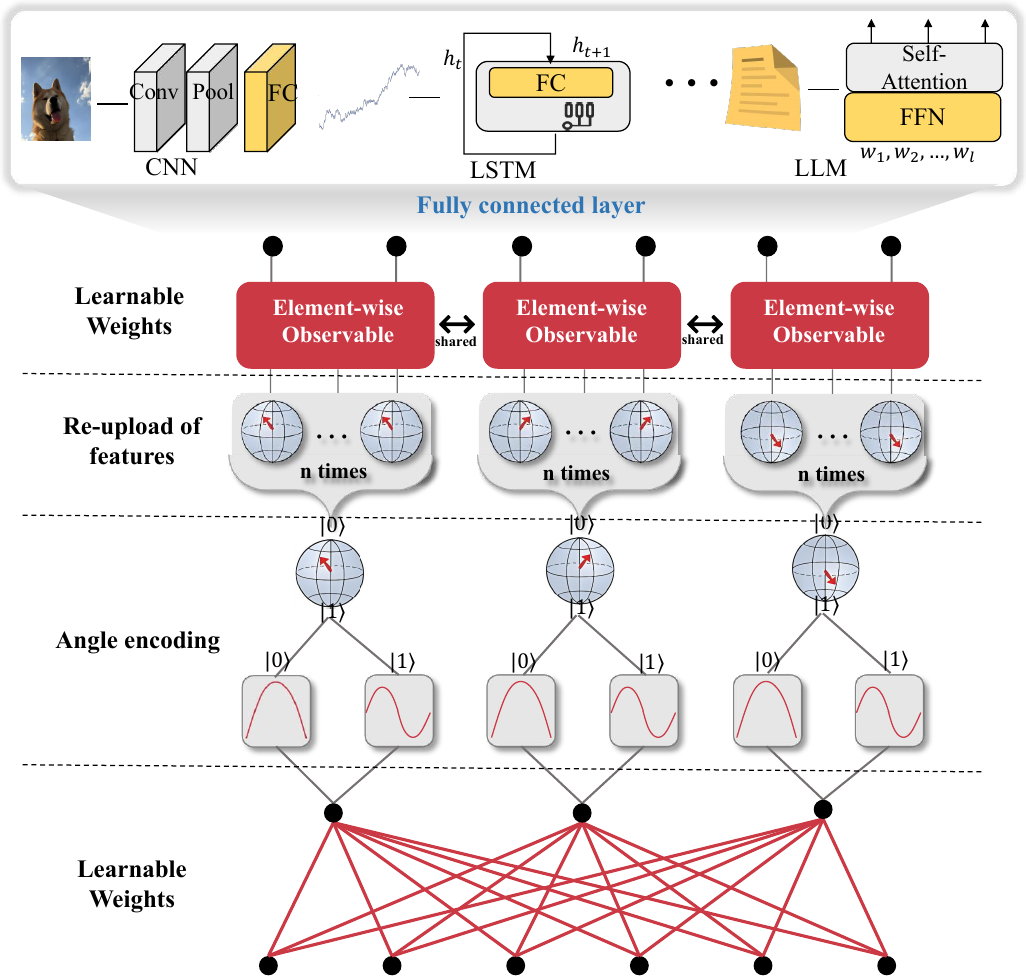}
\caption{The Q-RUN layer under relaxed implementation can replace fully connected layers of diverse architectures.}

    \label{fig2}
\end{figure}

% Q-RUN的理论形式具有希望，但是遗憾的其中涉及大量的张量积运算与大矩阵乘法。在小规模的隐含层尚且可以实现，但对于一个复杂的人工智能任务的隐含层可能要处理成百上千的隐含特征，这将需要构建 2的指数级别的空间, 这在当今最厉害的计算机上也不可能实现。为此我们必须对Q-RUN中严格的量子计算过程进行放松，以匹配当前全连连接层的效率。而对于严格版本的Q-RUN以待未来拥有成熟的量子计算机来实现。

% Inspired by the design of variational quantum circuits (VQCs), we propose a novel quantum-inspired module, termed the \emph{Quantum-Inspired Feature Angle Network (QIFAN)}. The core idea is to simulate classical analogues of quantum angle encoding, data re-uploading, and measurement-induced nonlinearity using purely classical neural modules. The QIFAN layer is implemented using a composition of linear projections, trigonometric functions, feature stacking, and a small multi-layer perceptron (MLP). This structure implicitly constructs a nontrivial Fourier feature basis and enhances the frequency expressivity of the model.

% \paragraph{Data Re-uploading Module.}
% Let \( \boldsymbol{x} \in \mathbb{R}^{d} \) denote the input. The Q-RUN  first applies a linear projection, 将d维数数据 reduced by a factor of {} ，
% \[
% x' = \boldsymbol{W} \boldsymbol{x} + b,
% \]
% 这个操作是为了保证Q-RUN中间变量不会占用过多的空间。通常情况下可设置为2，因为角度编码同时需要处理保存sin和cos的数据。该值可随着重上传次数n的增加而调整。然后针对x'中的每一个元素，我们仍然应用RY门并作用在初始的0态上，这个过程可以直接表达为：
% |hij>=cos{}{wi*xj}+sin(wi*xj).
% 其中\boldsymbol{w}\in R^n 可以是一个全层共享的参数向量，用于实现和前文参数矩阵类似的作用。此时，我们不在进行复杂的进行张量积运算，而将这些数据重复编码后的量子态直接拼接：
% h(j)=[|h1j>;|h2j>;...;|hnj>]
% 对于全部的元素，我们将得到一个特征张量h=[h(1),h(2)，。。。，,h(d/z)]
% 上述特征编码过程类似于一些经典傅里叶神经网络使用sincos函数对中间特征进行激活的行为，但本质的不同之处在与Q-RUN还会同样的元素进行多次编码取，而这样的理论依据是受到DRQC原理启发的。

\subsubsection{Data Re-uploading Module}

Let \( \boldsymbol{x} \in \mathbb{R}^{d} \) denote the input. Q-RUN first applies a linear projection \( f_0 \) to reduce the input dimension by a factor of \( \alpha \), resulting in:
\begin{equation}\label{e4}
\boldsymbol{x}' = f_0(\boldsymbol{x})=\boldsymbol{W_0} \boldsymbol{x} + \boldsymbol{b_0},
\end{equation}
where \( \boldsymbol{W_0} \in \mathbb{R}^{d/\alpha \times d} \) and \( \boldsymbol{b_0} \in \mathbb{R}^{d/\alpha} \). This step controls intermediate memory cost, preventing excessive growth of re-uploaded representations. The value of \(\alpha\) can be adjusted based on the number of re-uploads \(n\). Typically, we set \(\alpha = 2\) to halve the input dimension, compensating for the subsequent increase in feature size.

For each element \( x'_j \) in \(\boldsymbol{x}'\), we similarly apply an $Ry$ gate to the initial state \(\lvert 0 \rangle\). Without derivation, the result of the $i$-th encoding is given directly as:
\begin{equation}
\lvert h_{i,j} \rangle = \cos(w_i x'_j) \lvert 0 \rangle + \sin(w_i x'_j) \lvert 1 \rangle = Ry(w_i x'_j) \lvert 0 \rangle,
\end{equation}
where \(\boldsymbol{w} \in \mathbb{R}^n\) is a parameter vector shared across all elements. Instead of computing tensor products, we concatenate the encoding repeated \( n \) times for the \( j \)-th feature dimension:
\begin{equation}
h^{(j)} := \underbrace{\left[\lvert h_{1,j} \rangle; \lvert h_{2,j} \rangle; \ldots; \lvert h_{n,j} \rangle\right]}_{n\ \text{times}}.
\end{equation}
While reminiscent of the sinusoidal activations used in Fourier-based neural networks \cite{sitzmann2020implicit,dong2024fan}, this module fundamentally differs by re-uploading each element multiple times and subsequently processing them with an observable module. This construction is precisely what is inspired by DRQC.
% and form the full representation as a stacked tensor:
% \begin{equation}
% \boldsymbol{h} = \left[h^{(1)}, h^{(2)}, \ldots, h^{(d/\alpha)}\right].
% \end{equation}

% 这样我们就将问题限制在元素级别上,并且you

\subsubsection{Element-wise Observable Module}
To efficiently execute Q-RUN, we relax the strict reliance on quantum measurements in their theoretical form and instead introduce a lightweight MLP to approximate the measurement process. According to the universal approximation theorem \cite{hornik1991approximation}, an MLP with sufficient expressive power can approximate any measurable function. Therefore, we expect MLP to approximate the evolution and measurement behavior of a small quantum system. 

In Figure~\ref{F_A_2}, we illustrate the internal structure of the Element-wise Observable Module. This module is implemented as a three-layer element-wise MLP with an activation function. Specifically, $f_1$ approximates the tensor product, $f_2$ simulates the unitary evolution, and $f_3$ estimates the measurement expectation. For each input $h^{(j)} \in \mathbb{R}^d$, the forward process is given by:
\begin{equation}
f_{\hat{\boldsymbol{O}}}(h^{(j)})
:= f_3 \circ \sigma \circ f_2 \circ \sigma \circ f_1 (h^{(j)}),
\end{equation}
Each layer is a linear transformation with learnable parameters 
$\boldsymbol{W}_i$ and $\boldsymbol{b}_i$, followed by an activation function $\sigma(\cdot)$. 
The layer $f_2$ uses weight matrices $\boldsymbol{W}_2 \in \mathbb{R}^{m \times m}$, where $m$ is the hidden dimension. 
This dimension determines the expressive capacity of the Element-wise Observable Module, but it is usually kept small for computational efficiency. 
The output dimension of $f_3$ can be adjusted to control the Q-RUN output. 
This is analogous to performing a multi-basis measurement in a QNN by adjusting the number of bases. 
For example, setting the output dimension of $f_3$ to $\alpha$ ensures that the Q-RUN output matches its input dimension $d$.
Moreover, as this module depends only on the re-uploading times $n$ and is shared across all input dimensions, the computational cost of Q-RUN layers does not increase significantly.

\begin{figure}[t]
    \centering
    \includegraphics[width=0.6\linewidth]{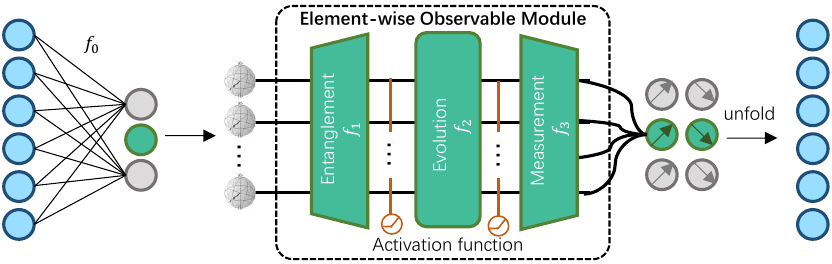}
    \caption{The internal structure of the Element-wise Observable Module.}
    \label{F_A_2}
\end{figure}

\begin{table*}[t]
\centering
\caption{
Comparison of parameter count and FLOPs between a standard MLP layer and Q-RUN layer. Here, \(d_{\text{in}}\) and \(d_{\text{out}}\) denote the input and output dimensions, respectively. \(\text{FLOPs}_{\text{act}}\) represents the FLOPs of the activation function, usually Tanh, while \(\text{FLOPs}_{\sin/\cos}\) refers to the FLOPs associated with the sinusoidal encoding elements in the data re-uploading process.}
\begin{tabular}{@{}lcc@{}}
\toprule
\textbf{Property} & \textbf{MLP Layer} & \textbf{Q-RUN Layer} \\
\midrule
Number of Parameters
&
$d_{\text{in}} d_{\text{out}}$
&
$\frac{d_{\mathrm{in}}  d_{\mathrm{out}}}{\alpha} + n (1 + 2m) + m^2 + m \alpha$\\
\midrule
FLOPs
&
$2  (d_{\mathrm{in}}  d_{\mathrm{out}}) + \text{FLOPs}_{\text{act}}  d_{\text{out}}$
&
$2 \frac{d_{\mathrm{out}}}{\alpha} \left( d_{\mathrm{in}} + n \mathrm{FLOPs}_{\sin/\cos} + m^{2} + m \left( 2 n + \alpha + \mathrm{FLOPs}_{\mathrm{act}} \right) \right)
$ \\
\bottomrule
\end{tabular}
\label{tab:mlp_qrun_comparison}

\end{table*}

% where each layer is parameterized by weights \( W_1 \in \mathbb{R}^{n \times d_{\mathrm{mlp}}} \), \( W_2 \in \mathbb{R}^{d_{\mathrm{mlp}} \times d_{\mathrm{mlp}}} \), and \( W_3 \in \mathbb{R}^{d_{\mathrm{mlp}} \times \alpha} \), respectively, with the activation function \( \sigma(\cdot) \) applied between layers (bias term is omitted for simplicity). The layer \( f_3 \) approximates a multi-base measurement with basis number \(\alpha\), ensuring that the output dimension of Q-RUN matches its input dimension. Moreover, since this MLP operates solely based on the re-uploading times \( n \) and is shared across all dimensions, it introduces no significant additional parameters or computational overhead.

Finally, the output of Q-RUN layers is given by:
\begin{equation}
f(\boldsymbol{x}) := [f_{\hat{\boldsymbol{O}}}(h^{(1)});f_{\hat{\boldsymbol{O}}}(h^{(2)});...;f_{\hat{\boldsymbol{O}}}(h^{(d/\alpha)})]^\top.
\end{equation}
Compared to the theoretical formulation of Q-RUN, this design reduces computational cost while preserving strong Fourier expressiveness. 
Relative to a standard MLP, Q-RUN typically uses fewer parameters due to the unique inductive bias introduced by the quantum-inspired model and can potentially achieve lower fitting error. In Table~\ref{tab:mlp_qrun_comparison}, we compare the parameter count and floating-point operations (FLOPs) of Q-RUN with those of a standard MLP. 
Although Q-RUN employs a more complex structure, the number of parameters can be maintained at a comparable or even smaller scale than that of a standard MLP by properly configuring the hyperparameters (such as $\alpha$, $n$, and $m$). Moreover, while Q-RUN involves more FLOPs, its modular design and element-wise observable computation enable high parallelism, making it significantly more practical than the theoretical form.

\section{Experiments}

Theoretically, Q-RUN’s Fourier series expressivity enables it to excel in complex data modeling tasks and holds promise for broader AI. To assess this potential, we conduct experiments guided by the following research questions:

\begin{itemize}
    \item \textbf{RQ1:} Can Q-RUN serve as an effective approximation of DRQC?
    \item \textbf{RQ2:} Can the Fourier series expressivity of Q-RUN be effectively applied to \textit{data modeling tasks}?
    \item \textbf{RQ3:} Can Q-RUN be generalized to broader AI tasks, such as \textit{predictive modeling tasks}?
    \item \textbf{RQ4:} How do the key design factors of Q-RUN affect its performance?
\end{itemize}

The data modeling experiments were conducted using a PyTorch implementation on an NVIDIA RTX 3060 GPU, while the prediction modeling tasks were performed on an NVIDIA RTX 4090 GPU, since the latter involves larger models. In contrast, data modeling tasks are relatively lightweight and can be efficiently executed on the RTX 3060. To ensure fair validation, the scale of Q-RUN was controlled to be comparable to or smaller than that of the classical baselines, eliminating interference from model size differences.

\begin{figure*}
    \centering
    \includegraphics[width=1\linewidth]{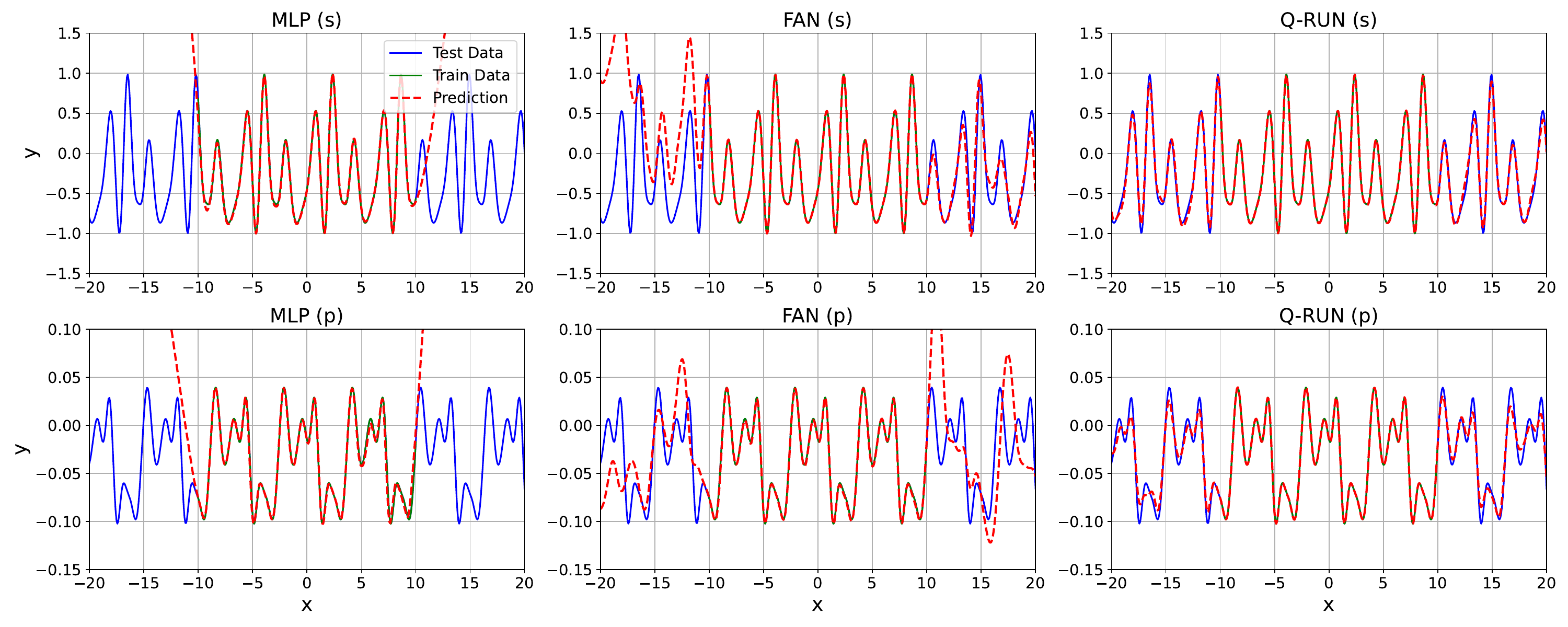}
    \caption{Performance of Q-RUN, FAN, and MLP in approximating DRQC. (s) denotes single-qubit multi-layer DRQC, and (p) denotes multi-qubit single-layer DRQC.}
    \label{ef1}
\end{figure*}

\subsection{RQ1: Experiments on DRQC Approximation}

As Q-RUN is a classical neural network layer inspired by DRQC, it is natural to ask whether Q-RUN can effectively approximate the behavior of DRQC. To investigate this, we randomly initialized a single-qubit DRQC with data re-uploaded 8 times, using the structure illustrated in Fig.~\ref{p1}(c), and an 8-qubit DRQC with a single-layer multi-qubit structure, as illustrated in Fig.~\ref{p1}(b). We uniformly sampled 1,000 input points from the interval $[-20, 20]$, and obtained their corresponding outputs by executing the DRQC on the Pennylane quantum simulator \cite{bergholm2018pennylane}, resulting in a dataset ${(x_i, y_i)}$. The subset with $x \in [-10, 10]$ was used for training, while the remainder served as the test set for evaluating generalization. We compared against MLP and Fourier Analysis Networks (FAN), where FAN enhances MLPs with the capacity to model periodic patterns through Fourier series \cite{dong2024fan}.

The results are presented in Figure~\ref{ef1}. As expected, MLP performs poorly outside the training domain, primarily due to its inability to capture high-frequency components. FAN demonstrates improved periodic modeling, but its approximation of DRQC remains limited. This result was unexpected, suggesting that the periodic functions from DRQC, while appearing simple, may actually reflect complex high-frequency structures originating from quantum phenomena such as entanglement and superposition, which makes them challenging for classical models to approximate. In contrast, Q-RUN achieves the best performance, indicating that it successfully incorporates key DRQC-inspired properties and exhibits stronger capability in modeling complex, high-frequency quantum functions. This insight inspires AI for Quantum, and Q-RUN may emerge as a promising candidate for quantum circuit simulation \cite{jones2019quest}.

\subsection{RQ2: Experiments on Data Modeling Tasks}

In this subsection, we consider three fundamental data modeling tasks, namely implicit representation, density estimation, and energy modeling, which cover diverse modalities and demonstrate broad generality.

\subsubsection{Implicit Representation}

Implicit Representation (IR) aims to learn a continuous function
$f_\theta: \mathbb{R}^d \rightarrow \mathbb{R}$ that maps spatial coordinates
to corresponding signal values. This formulation provides a compact and continuous description of data, enabling the recovery of high-resolution signals from arbitrary query points in the coordinate space. In the literature, IR tasks particularly require the ability to accurately fit high-frequency functions \cite{sitzmann2020implicit,zhao2024quantum}.

\begin{table}[!t]
\renewcommand{\arraystretch}{1}
\setlength{\tabcolsep}{12pt}
\caption{Implicit Representation results measured by Mean Squared Error (MSE, $\times 10^{-3}$). \#Params indicates the number of learnable parameters in the model. Lower values indicate better performance. The best result in each column is highlighted in gray.}
\centering
\begin{tabular}{c c c c c c c}
\toprule
Method & \#Params & Cello & \#Params & Astronaut & Camera & Coffee \\
\midrule
Tanh     & 831 & $14$ & 841 & $20$  & $5.8$ & $14.8$ \\
ReLU     & 831 & $6.8$ & 841 & $9.9$  & $2.7$ & $4.2$ \\
PWLNN    & 945 & $6.0$ & 945 & $9.4$  & $1.0$ & $2.8$ \\
FNO      & 689 & $5.4$ & 744 & $9.9$  & $4.1$ & $3.9$ \\
RFF      & 791 & $6.0$ & 791 & $5.1$  & $1.9$ & $4.9$ \\
SIREN    & 691 & $5.5$ & 701 & $9.0$  & $1.5$ & $2.3$ \\
FAN      & 691 & $6.4$ & 711 & $9.7$  & $1.4$ & $2.3$ \\
KAN      & \cellcolor{gray!20}{646} & \cellcolor{gray!20}$5.3$ & 672 & $12.0$ & $2.4$ & $2.8$ \\
QIREN    & 649 & $5.5$ & 657 & \cellcolor{gray!20}$4.0$ & $1.1$ & $1.5$ \\
\midrule
Q-RUN    & \cellcolor{gray!20}646 & \cellcolor{gray!20}$5.3$ & \cellcolor{gray!20}649 & $5.5$ & \cellcolor{gray!20}$0.6$ & \cellcolor{gray!20}$1.2$ \\
\bottomrule
\end{tabular}
\label{tb2}
\end{table}

\textit{Datasets.} We consider datasets consisting of one-dimensional audio signals and two-dimensional images, which are standard benchmarks widely used in IR studies. 
\begin{itemize}
    \item \textbf{Cello:} This dataset is derived from a short excerpt of a movement in Bach’s Cello Suites. The raw waveform is uniformly sampled at 1000 time points, with amplitudes normalized to the range $[-1, 1]$.
    \item \textbf{Astronaut, Camera, and Coffee:} These are three widely used grayscale images from the computer vision community \cite{van2014scikit}. Each image is cropped and downsampled to $32 \times 32$ pixels following the QIREN setting \cite{zhao2024quantum}, which imposes extreme parameter constraints such that the number of learnable parameters is significantly smaller than the number of pixels.
\end{itemize}

\textit{Baselines.} We compare Q-RUN against several baselines, categorized into two groups.
\begin{itemize}
    \item The first group comprises Fourier-based models:
    \begin{itemize}
        \item \textbf{SIREN} \cite{sitzmann2020implicit}: Implicit MLPs using sinusoidal activations to learn continuous signals with fine details.
        \item \textbf{RFF} \cite{tancik2020fourier}: RFF uses random Fourier features for input encoding to improve generalization and convergence.
        \item \textbf{FAN} \cite{dong2024fan}: Fourier Analysis Network designed to enhance learning across diverse AI tasks by leveraging Fourier representations, and can be considered a state-of-the-art approach.
        \item \textbf{FNO} \cite{kovachki2021neural}: The Fourier Neural Operator (FNO) is a neural network architecture capable of learning solution operators for partial differential equations (PDEs) directly in the frequency domain.
        \item \textbf{QIREN} \cite{zhao2024quantum}: A quantum-based implicit neural representation model employing data re-uploading circuits, which can be regarded as a state-of-the-art approach.
    \end{itemize}

    \item The second group includes models built upon other bases:
    \begin{itemize}
        \item \textbf{MLPs (Tanh/ReLU)}: Standard multilayer perceptrons with classical activation functions.
        \item \textbf{PWLNN} \cite{tao2022piecewise}: Piecewise linear neural networks approximate functions by dividing the input space into regions, within each of which the function is represented by a linear mapping.
        \item \textbf{KANs} \cite{liu2024kan}: Kolmogorov–Arnold Networks are neural architectures inspired by the Kolmogorov–Arnold representation theorem, where each network edge is modeled by a learnable univariate function (e.g., B‑spline function), rather than a fixed linear weight.
    \end{itemize}
\end{itemize}

\textit{Results.}
Experimental results for IR are shown in Table~\ref{tb2}. Among standard MLPs, Tanh exhibits the worst performance, while ReLU performs slightly better. PWLNN can be regarded as an enhanced version of ReLU, which explains its slightly better performance, although it also requires more learnable parameters. KANs achieve strong performance on one-dimensional signal tasks such as audio, but show limited gains on two-dimensional image signals. In contrast, methods incorporating Fourier features generally outperform the above approaches. Specifically, FNO achieves the second-best results on audio signal representation, while RFF handles high-dimensional IR tasks more effectively, albeit with an increased number of parameters. SIREN excels on IR across both 1D and 2D signals, and FAN performs similarly to SIREN. QIREN achieves good results on IR for lower-dimensional signals but struggles to scale to larger model sizes due to quantum hardware constraints. In contrast, Q-RUN model achieves nearly the best results overall, both in terms of the number of learnable parameters and the approximation error of the represented signals. We attribute this performance to the strong Fourier series fitting capability of Q-RUN.

% \begin{table}[t]
% \renewcommand{\arraystretch}{1}
% \setlength{\tabcolsep}{12pt}
% \caption{Implicit Representation results measured by Mean Squared Error (MSE, $\times 10^{-3}$). \#Params indicates the number of learnable parameters in the model. Lower values indicate better performance. The best result in each column is highlighted in gray.}
% \centering
% \begin{tabular}{c c c c c c c}
% \toprule
% Method & \#Params & Cello & \#Params & Astronaut & Camera & Coffee \\
% \midrule
% Tanh     & 831 & $14$ & 841 & $20$  & $5.8$ & $14.8$ \\
% ReLU     & 831 & $6.8$ & 841 & $9.9$  & $2.7$ & $4.2$ \\
% PWLNN    & 945 & $6.0$ & 945 & $9.4$  & $1.0$ & $2.8$ \\
% FNO      & 689 & $5.4$ & 744 & $9.9$  & $4.1$ & $3.9$ \\
% RFF      & 791 & $6.0$ & 791 & $5.1$  & $1.9$ & $4.9$ \\
% SIREN    & 691 & $5.5$ & 701 & $9.0$  & $1.5$ & $2.3$ \\
% FAN      & 691 & $6.4$ & 711 & $9.7$  & $1.4$ & $2.3$ \\
% KAN      & \cellcolor{gray!20}{646} & \cellcolor{gray!20}$5.3$ & 672 & $12.0$ & $2.4$ & $2.8$ \\
% QIREN    & 649 & $5.5$ & 657 & \cellcolor{gray!20}$4.0$ & $1.1$ & $1.5$ \\
% \midrule
% Q-RUN    & \cellcolor{gray!20}646 & \cellcolor{gray!20}$5.3$ & \cellcolor{gray!20}649 & $5.5$ & \cellcolor{gray!20}$0.6$ & \cellcolor{gray!20}$1.2$ \\
% \bottomrule
% \end{tabular}
% \label{tb2}
% \end{table}

\subsubsection{Density Estimation}

Density Estimation (DE) aims to model a normalized probability density function $P_\theta(X)$. 
This task requires a specialized loss function. Specifically, the task predicts an unnormalized density function \( q(x) \), which is normalized by dividing by its integral:
\[
\hat{q}(x) = \frac{q(x)}{\int q(x) \, dx}.
\]
The training objective is to minimize the negative log-likelihood loss, defined as:
\[
\mathcal{L} = - \mathbb{E}_{x \sim p_{\mathrm{data}}} \left[\log \hat{q}(x)\right].
\]
In practice, given a set of training samples \(\{x_i\}_{i=1}^N\), this expectation is approximated by the empirical mean:
\[
\mathcal{L} \approx - \frac{1}{N} \sum_{i=1}^N \log \hat{q}(x_i).
\]
Theoretically, for relatively smooth density functions, standard function approximators are often sufficient. However, for more complex density functions with rapid oscillations or sharp variations, it is necessary to employ models with stronger high-frequency function fitting capabilities.

\textit{Datasets.} We adopt two synthetic distributions, $P(X)_1$ and $P(X)_2$, each comprising 25 Gaussian components. The means $\mu$ of the Gaussian components are uniformly sampled from $[-8, 8]$ for both distributions. For $P(X)_1$, the standard deviations $\sigma$ are uniformly sampled from a wider range $[0.08, 1]$, resulting in relatively smoother, low-frequency variations. In contrast, $P(X)_2$ uses a narrower standard deviation range $[0.08, 0.1]$, producing more abrupt, high-frequency variations. A total of 1,024 samples are drawn from each distribution for training and evaluation.

\textit{Baselines.}
We compare Q-RUN with several baseline models that are commonly adopted in IR tasks. These baselines are grouped into two categories. The first group includes Fourier-based models: \textbf{SIREN}, \textbf{RFF}, \textbf{FAN}, and \textbf{FNO}. The second group consists of other architectures: standard MLPs with \textbf{Tanh} and \textbf{ReLU} activations and \textbf{PWLNN}. KANs and QIREN are not reported for DE because KANs perform poorly on this task, and scaling up QIREN for DE is difficult on classical computers.

\begin{figure}[t]
    \centering
    \includegraphics[width=1\linewidth]{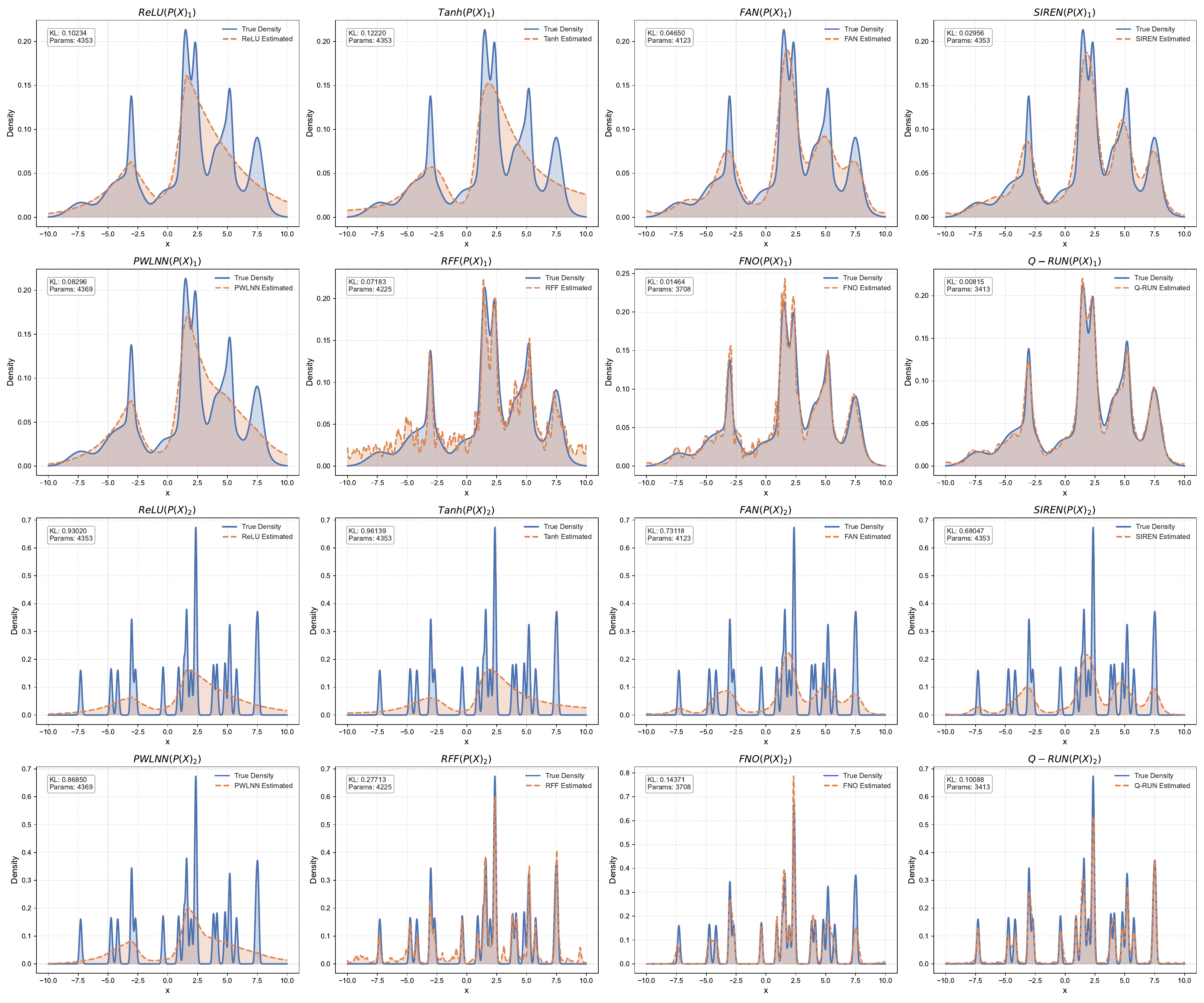}
\caption{Performance of various methods on the Density Estimation task, including the reported KL divergence values and corresponding model parameter counts.}        \label{F_A_3_3}
\end{figure}

\textit{Results.} The visualized experimental results for DE are shown in Figure~\ref{F_A_3_3}.
We observe that standard MLPs with ReLU and Tanh perform poorly on $P(X)_1$ and $P(X)_2$, with PWLNN offering only marginal improvements. Fourier-based methods achieve significantly better performance. Specifically, SIREN and FAN behave similarly, performing better on $P(X)_1$ than $P(X)_2$, likely due to the higher frequency components in $P(X)_2$. RFF and FNO further improve the results, but they also appear to introduce additional noise. In particular, the instability of RFF may stem from the randomly initialized Fourier features, making it less stable than FNO. In contrast, Q-RUN achieves the best performance on both distributions with the fewest parameters, outperforming all other models in fitting regions with extreme variations, while avoiding the noise issues seen in RFF and FNO.

\subsubsection{Energy Modeling}

Energy Modeling (EM) trains a regressor $E_\theta: \mathcal{M} \rightarrow \mathbb{R}$ to predict molecular potential energy from atomic coordinates $\mathcal{M}$. 
This task requires a specialized loss function. Let \( f_\theta({x}) \) denote the predicted potential energy for molecular coordinates \({x}\). The predicted force \(\hat{\mathbf{F}}\) is obtained as the negative gradient of the energy with respect to the coordinates:
\begin{equation*}
    \hat{E} = f_\theta({x}),
\end{equation*}
\begin{equation*}
    \hat{\mathbf{F}} = - \nabla_{{x}} \hat{E} = - \nabla_{{x}} f_\theta({x}).
\end{equation*}
The loss function consists of two parts: the MSE of the predicted energy and the MSE of the predicted forces:
\begin{equation*}
    \mathcal{L}_{\text{energy}} = \frac{1}{N} \sum_{i=1}^N \left( \hat{E}_i - E_i \right)^2,
\end{equation*}
\begin{equation*}
    \mathcal{L}_{\text{force}} = \frac{1}{N} \sum_{i=1}^N \left\| \hat{\mathbf{F}}_i - \mathbf{F}_i \right\|^2,
\end{equation*}
where \(E_i\) and \(\mathbf{F}_i\) denote the ground truth energy and forces, respectively.
The total training loss is a weighted sum of these two terms:
\begin{equation*}
    \mathcal{L} = \mathcal{L}_{\text{energy}} + \beta\times\mathcal{L}_{\text{force}}.
\end{equation*}

\textit{Datasets.} This dataset~\cite{le2025symmetry} contains molecular configurations of water molecules (\( \mathrm{H_2O} \)), including the more complex \( \mathrm{H_2O} \) dimer system, where two water molecules interact. The \( \mathrm{H_2O} \) subset consists of 1,000 single water molecule configurations, each annotated with total energy and atomic forces along all coordinate directions. The \( \mathrm{H_2O} \) dimer subset contains 10,000 dimer configurations with corresponding energy and force labels. The learning task is to regress the potential energy from atomic positions, serving as a standard benchmark for molecular energy prediction models. Following standard practice, 20\% of the dataset is held out for testing. Before training, we preprocess the data by first applying normalization, followed by centering.

\textit{Baselines.}
We compare Q-RUN against several baselines, which are the same as those used for DE, including Fourier-based models (\textbf{SIREN}, \textbf{RFF}, \textbf{FAN}, \textbf{FNO}) and MLPs with \textbf{Tanh} and \textbf{ReLU} activations, as well as \textbf{PWLNN}. Due to the increased complexity of this task, KANs still fail to achieve good performance, and QIREN remains infeasible to run on classical computers because of its required model scale.

\begin{table}[t]
\setlength{\tabcolsep}{1pt}
\centering
\caption{Energy Modeling results measured by Mean Absolute Error (MAE). Lower values indicate better performance.}
\begin{tabular}{c cccccccc}
\toprule
Dataset & Tanh & ReLU & PWLNN & FNO & RFF & SIREN & FAN & Q-RUN \\
\midrule
\#Params 
& 4,061 & 4,061 & 4,689 & 4,028 & 16,705 & 4,061 & 4,551 & \cellcolor{gray!20}3,493 \\
H\textsubscript{2}O 
& $3.1 \times 10^{-3}$ & $4.9 \times 10^{-2}$ & $4.9 \times 10^{-2}$ & $2.0 \times 10^{-3}$ & $5.5 \times 10^{-2}$ & $2.2 \times 10^{-4}$ & $9.5 \times 10^{-5}$ & \cellcolor{gray!20}$6.9 \times 10^{-6}$ \\
H\textsubscript{2}O Dimer 
& $3.2 \times 10^{-4}$ & $2.1 \times 10^{-1}$ & $2.1 \times 10^{-1}$ & $7.5 \times 10^{-4}$ & $2.0 \times 10^{-4}$ & $1.7 \times 10^{-4}$ & $2.9 \times 10^{-4}$ & \cellcolor{gray!20}$3.1 \times 10^{-5}$ \\
\bottomrule
\end{tabular}
\label{tb:em}
\end{table}

\textit{Results.} Experimental results for EM are shown in Table~\ref{tb:em}. Methods based on ReLU perform the worst. Although PWLNN can be considered an enhanced version of ReLU, it degrades to ReLU-level performance on the EM task. In contrast, Tanh performs somewhat better due to its suitability for handling symmetric molecular data. Methods incorporating Fourier features, such as SIREN and FAN, remain competitive. Nevertheless, Q-RUN achieves superior performance. Notably, Q-RUN achieves up to a 449× improvement over Tanh and approximately a 13× gain over the best-performing FAN, all while using the fewest trainable parameters under comparable model scales. This superior performance is attributed to Q-RUN’s quantum-inspired Fourier series modeling capacity, which enables effective approximation of complex, high-frequency energy functions with fewer parameters.

\subsection{RQ3: Experiments on Predictive Modeling Tasks}

In this subsection, we explore representative predictive modeling tasks spanning key AI domains, including natural language processing, computer vision, and time-series analysis, and we additionally consider the parameter-efficient fine-tuning task.

\subsubsection{Time Series Forecasting}

\begin{table*}[t]
\caption{Performance of different sequence models on time series forecasting tasks. Model parameters are indicated below each model name.}
\centering
\renewcommand{\arraystretch}{1.3}
\setlength{\tabcolsep}{3.5pt}
\begin{tabular}{@{}cccccccccccc|cc@{}}
\toprule
\multirow{2}{*}{Dataset} & \multirow{2}{*}{Output} 
& \multicolumn{2}{c}{\shortstack{LSTM\\(12.51M)}} 
& \multicolumn{2}{c}{\shortstack{Mamba\\(12.69M)}} 
& \multicolumn{2}{c}{\shortstack{Transformer\\(12.12M)}} 
& \multicolumn{2}{c}{\shortstack{FANGated\\(11.07M)}} 
& \multicolumn{2}{c}{\shortstack{FAN\\(11.06M)}} 
& \multicolumn{2}{c}{\shortstack{Q-RUN\\(10.48M)}} \\
\cmidrule(lr){3-4} \cmidrule(lr){5-6} \cmidrule(lr){7-8} \cmidrule(lr){9-10} \cmidrule(lr){11-12} \cmidrule(lr){13-14}
& & MSE & MAE & MSE & MAE & MSE & MAE & MSE & MAE & MSE & MAE & MSE & MAE  \\
\midrule
\multirow{4}{*}{Weather} 
 & 96  & 1.069 & 0.742 & 0.552 & 0.519 & 0.413 & 0.438 & 0.292 & 0.380 & 0.313 & 0.431 & \cellcolor{gray!20}0.247 & \cellcolor{gray!20}0.341  \\
 & 192 & 1.090 & 0.778 & 0.700 & 0.595 & 0.582 & 0.540 & 0.535 & 0.550 & 0.472 & 0.525 & \cellcolor{gray!20}0.323 & \cellcolor{gray!20}0.402  \\
 & 336 & 0.992 & 0.727 & 0.841 & 0.667 & 0.751 & 0.626 & 0.637 & 0.602 & 0.719 & 0.581 & \cellcolor{gray!20}0.394 & \cellcolor{gray!20}0.455  \\
 & 720 & 1.391 & 0.892 & 1.171 & 0.803 & 0.967 & 0.715 & 0.845 & 0.706 & 0.732 & 0.670 & \cellcolor{gray!20}0.493 & \cellcolor{gray!20}0.522  \\
\midrule
\multirow{4}{*}{Exchange} 
 & 96  & 0.938 & 0.794 & 0.908 & 0.748 & 0.777 & 0.681 & 0.685 & 0.644 & 0.657 & 0.623 & \cellcolor{gray!20}0.522 & \cellcolor{gray!20}0.588  \\
 & 192 & 1.241 & 0.899 & 1.328 & 0.925 & 1.099 & 0.800 & 0.998 & 0.757 & 0.968 & 0.741 & \cellcolor{gray!20}0.693 & \cellcolor{gray!20}0.688  \\
 & 336 & 1.645 & 1.048 & 1.512 & 0.992 & 1.614 & 1.029 & 1.511 & 0.961 & 1.266 & 0.905 & \cellcolor{gray!20}0.936 & \cellcolor{gray!20}0.799 \\
 & 720 & 1.949 & 1.170 & 2.350 & 1.271 & 2.163 & 1.204 & 1.658 & 1.104 & 1.857 & 1.145 & \cellcolor{gray!20}1.284 & \cellcolor{gray!20}0.922  \\
\midrule
\multirow{4}{*}{Traffic} 
 & 96  & 0.659 & 0.359 & 0.666 & 0.377 & 0.656 & 0.357 & 0.647 & 0.355 & 0.643 & \cellcolor{gray!20}0.347 & \cellcolor{gray!20}0.635 & 0.353  \\
 & 192 & 0.668 & 0.360 & 0.671 & 0.381 & 0.672 & 0.363 & 0.649 & 0.353 & 0.657 & 0.354 & \cellcolor{gray!20}0.642 & \cellcolor{gray!20}0.345  \\
 & 336 & \cellcolor{gray!20}0.644 & \cellcolor{gray!20}0.342 & 0.665 & 0.374 & 0.673 & 0.360 & 0.665 & 0.358 & 0.656 & 0.353 & 0.661 & 0.353 \\
 & 720 & \cellcolor{gray!20}0.654 & \cellcolor{gray!20}0.351 & 0.662 & 0.364 & 0.701 & 0.380 & 0.682 & 0.369 & 0.673 & 0.363 & 0.669 & 0.356  \\
\midrule
\multirow{4}{*}{ETTh} 
 & 96  & 0.999 & 0.738 & 0.860 & 0.697 & 1.139 & 0.853 & 0.842 & 0.736 & 0.873 & 0.707 & \cellcolor{gray!20}0.786 & \cellcolor{gray!20}0.677  \\
 & 192 & 1.059 & 0.759 & 0.849 & \cellcolor{gray!20}0.700 & 1.373 & 0.932 & 0.885 & 0.748 & 0.914 & 0.741 & \cellcolor{gray!20}0.833 & 0.707  \\
 & 336 & 1.147 & 0.820 & 1.005 & 0.745 & 1.261 & 0.924 & 0.980 & \cellcolor{gray!20}0.770 & 0.999 & 0.793 & \cellcolor{gray!20}0.947 & \cellcolor{gray!20}0.770  \\
 & 720 & 1.206 & 0.847 & 0.994 & \cellcolor{gray!20}0.758 & 1.056 & 0.819 & 1.002 & 0.798 & 1.031 & 0.818 & \cellcolor{gray!20}0.932 & 0.776 \\
\bottomrule
\end{tabular}
\label{t_A_3_2_6}
\end{table*}

Time Series Forecasting (TSF) predicts the next [96, 192, 336, 720] steps from an input sequence of 96 time steps.

\textit{Datasets.} We use common benchmark datasets, including:

\begin{itemize}
    \item \textbf{Weather:} Contains 21 meteorological indicators such as air temperature and humidity, recorded every 10 minutes throughout 2020.
    \item \textbf{Exchange:} Contains daily foreign exchange rates from eight countries collected between 1990 and 2016.
    \item \textbf{Traffic:} Describes road occupancy rates with hourly data recorded by San Francisco highway sensors from 2015 to 2016.
    \item \textbf{ETTh:} Consists of two hourly datasets covering load features of seven kinds of petroleum and power transformers from July 2016 to July 2018.
\end{itemize}

\textit{Baselines.}
We compare Q-RUN against well-established architectures:
\begin{itemize}
    \item \textbf{LSTM}: A classic recurrent architecture for time series and sequence modeling.
    \item \textbf{Mamba} \cite{gu2023mamba}: A structured state-space model with efficient sequence modeling capabilities.
    \item \textbf{Transformer} \cite{vaswani2017attention}: A general-purpose attention-based architecture. 
    \item \textbf{FAN} and \textbf{FANGated} \cite{dong2024fan}: Both designed as plug-and-play state-of-the-art modules applicable across a wide range of AI tasks. FANGated uses a gating mechanism to control information flow between standard FCs and FAN components. They are implemented as replacements for the feed-forward networks (FFN) in Transformer, and Q-RUN is integrated using the same strategy.
\end{itemize}
Notably, our goal is not to outperform task-specific, highly specialized models, but to evaluate Q-RUN as a general-purpose, plug-and-play module for enhancing mainstream AI architectures. Accordingly, we compare it with representative baselines and modular methods such as FAN and FANGated, rather than domain-specific state-of-the-art solutions.

\textit{Results.}
Experimental results for the TSF tasks are summarized in Table~\ref{t_A_3_2_6}.
Specifically, traditional models like LSTM, Mamba, and Transformer show mixed performance across different datasets. Replacing FFN with the state-of-the-art model FAN or FANGated improves performance and reduces learnable parameters. Compared to FAN, FANGated’s hybrid MLP-Fourier design underperforms, indicating TSF benefits more from pure Fourier modeling. Our Q-RUN also leverages Fourier-based modeling, but draws inspiration from DRQC, providing enhanced high-frequency fitting capabilities. As a result, Q-RUN consistently achieves the best performance across most datasets while using the fewest parameters.

\begin{table}[t]
\caption{Comparison of model performance on language modeling sentiment classification tasks. Accuracy is reported on SST-2, IMDB, Sentiment140, and Amazon Reviews datasets. Higher values indicate better performance.}
\centering
\renewcommand{\arraystretch}{1.1}
\setlength{\tabcolsep}{6pt}
\begin{tabular}{cccccc}
\toprule
Model & \#Params (M) & SST-2 & IMDB & Sentiment140 & Amazon Reviews \\
\midrule
LSTM        & 120.14 & 80.60 & 64.38 & 59.79 & 71.52 \\
Mamba       & 129.73 & 79.59 & 62.03 & 58.74 & 67.19 \\
BERT & 109.48 & 81.19 & 69.94 & 57.79 & 71.55 \\
FANGated    & 95.33  & 80.39 & 70.12 & \cellcolor{gray!20}61.94 & 76.89 \\
FAN         & 95.32  & 81.54 & \cellcolor{gray!20}73.98 & 60.93 & 77.63 \\
\hline
Q-RUN       & \cellcolor{gray!20}66.99 & \cellcolor{gray!20}82.11 & 73.12 & 61.37 & \cellcolor{gray!20}78.21 \\
\bottomrule
\end{tabular}
\label{tb2_lm}
\end{table}

 %Q-RUN在数据建模类任务取得优异表现的同时，我们自然更加好奇Q-RUN这种能力是否能在更广泛的AI任务中展现优势。

%Setup: We 同样consider three fundamental data 决策tasks, which span diverse目前最受关注NLP、CV以及信号等领域. Formally, these tasks are defined as follows:

%时间序列预测：..... 数据集包括Weather Exchange Traffic ETTh

%语言建模： 具体来说是情感分类 .....                 其中仅使用SST-2训练模型，在 SST-2 IMDB Sent140 Amazon等数据集上测试。

%图像识别：.......数据集包括 MNIST MNIST-M F-MNIST F-MNIST-C

%总体的实验设置我们遵循FAN，关于更For more detailed descriptions of the tasks and datasets, please refer to Section S3 in the supplementary material. 此外S4中我们还考虑了LORA任务。

% 基线模型方面，我们首先考虑了标准模型，如 LSTM、Mamba 以及 Transformer（在语言建模任务中指的是基于 Transformer 的 BERT）。其次，我们纳入了 FAN 和 FANGated 方法进行比较，其中 FANGated 通过门控机制在普通全连接层与 FAN 之间动态调控信息流。这两种方法均作为 Transformer（或 BERT）中部分全连接层的替代结构进行实验，我们提出的 Q-RUN 同样以该方式进行替换。需要特别指出的是，我们的目标在于评估 Q-RUN 作为通用即插即用模块对主流 AI 模型的增益能力。因此，我们所选的比较对象是具有代表性的标准结构和可插拔方法，如 FAN 与 FANGated，而不是各个具体任务中经过高度定制优化的最优专业化模型

\subsubsection{Language Modeling}

Language Modeling (LM) in this study focuses on sentiment classification, where the goal is to predict the sentiment label of a given text. This task requires capturing both syntactic and semantic information from the input sequence. 

\textit{Datasets.} We select several widely used benchmark datasets in the sentiment classification domain. Models are trained solely on SST-2 and evaluated not only on the SST-2 test set but also on IMDB, Sentiment140, and Amazon Reviews to assess their zero-shot generalization. These datasets are summarized as follows.
\begin{itemize}
  \item \textbf{SST-2} (Stanford Sentiment Treebank 2): A widely used sentiment analysis dataset consisting of 11,855 sentences from movie reviews, with binary labels (positive or negative) at sentence level. 

  \item \textbf{IMDB:} Large-scale movie review dataset for binary sentiment classification, containing approximately 50,000 labeled reviews split evenly into positive and negative categories. It is widely used in NLP research.

  \item \textbf{Sentiment140:} A Twitter sentiment dataset with 1.6 million tweets automatically labeled as positive or negative based on emoticons. Contains roughly 800,000 positive and 800,000 negative tweets. 

  \item \textbf{Amazon Reviews:} A large-scale dataset comprising millions of reviews, including ratings, review texts, timestamps, and product metadata. Widely used for sentiment classification and recommendation research. 
\end{itemize}

\textit{Baselines.} Similar to the TSF task, we compare Q-RUN against well-established architectures, including \textbf{LSTM}, \textbf{Mamba}, \textbf{FAN}, and \textbf{FANGated}. We also include \textbf{BERT}~\cite{devlin2019bert}, a widely used Transformer-based language model. For FAN, FANGated, and Q-RUN, we replace the feed-forward networks (FFN) in BERT with the corresponding modules to assess their effectiveness as drop-in components.

\textit{Results.}
Experimental results are shown in Table~\ref{tb2_lm}. On the LM task, Mamba performs slightly worse than LSTM, while Transformer generally yields better results. Replacing the FFN with FAN or FANGated yields further gains, while substituting with Q-RUN achieves competitive performance in most cases. Remarkably, Q-RUN attains these improvements while reducing the number of trainable parameters by nearly 40\% compared to the original BERT.

\subsubsection{Parameter-efficient Fine-tuning}
% ---------------------------

\begin{table}[t]
\centering
\caption{Comparison of models in terms of parameter ratios, per-epoch training time, validation accuracy, and test accuracy across different methods. Higher values indicate better performance. For DistilBERT-base, the baseline corresponds to full fine-tuning, while for Mistral-7b, the baseline corresponds to zero-shot inference.}
\setlength{\tabcolsep}{4pt}  % 调整列间距
\renewcommand{\arraystretch}{1.3}  % 调整行高
\resizebox{0.95\linewidth}{!}{%
\begin{tabular}{c|cccc|ccc}
\toprule
\multirow{2}{*}{Method} & \multicolumn{4}{c|}{DistilBERT-base} & \multicolumn{3}{c}{Mistral-7b} \\
                        & \#Params (\%) & Time (min) & Val Acc (\%) & Test Acc (\%) & \#Params (\%) & Time (min) & Test Acc (\%) \\
\midrule
Baseline     & 100.0  & 8.92 & 92.08 & 91.53 & 100.0   & - & 62.12 \\
\hline
LoRA (r=4)           & 0.1105   & 7.25 & 90.70 & 89.47 & 0.0470    & 3.59 & 81.76 \\
LoRA (r=8)           & 0.2195   & 7.36 & 90.08 & 89.09 & 0.0939    & 3.61 & 82.02 \\
\hline
DoRA (r=4)           & 0.1240   & 7.32 & 90.78 & 90.60 & 0.0515    & 3.69 & 80.61\\
DoRA (r=8)           & 0.2330   & 9.49 & 90.66 & 90.28 & 0.0985    & 3.88 & 83.16 \\
\hline
Q-RUN (r=4)          & 0.1165   & 7.51 & 91.12 & 90.40 & 0.0476    & 3.73 & 82.65 \\
Q-RUN (r=8)          & 0.2255   & 7.66 & \cellcolor{gray!20}\textbf{91.28} & \cellcolor{gray!20}\textbf{90.73} & 0.0945    & 3.74 & \cellcolor{gray!20}\textbf{84.18} \\
\bottomrule
\end{tabular}%
}
\label{tab:lora}
\end{table}

We explore the potential of Q-RUN to enhance parameter-efficient fine-tuning in this subsection. This motivation is inspired by recent advances in quantum fine-tuning of pre-trained language models, where variational quantum circuits have demonstrated strong capabilities in adapting large models with fewer trainable parameters \cite{liu2024quantum,kong2025quantum,chen2024quanta}. As a classically implementable method inspired by quantum machine learning, Q-RUN is theoretically expected to exhibit similar fine-tuning capabilities. Moreover, since it does not rely on quantum hardware, Q-RUN may offer improved efficiency and faster deployment in practical large language model adaptation scenarios. Compared to purely quantum approaches, Q-RUN avoids the limitations of current quantum devices, such as noise and scalability constraints, while retaining key advantages derived from QML.

\paragraph{Setup}

We conduct parameter-efficient fine-tuning experiments on two representative language models.
\begin{itemize}
    \item \textbf{Fine-tuning DistilBERT on IMDB:} 
    We fine-tune a pre-trained DistilBERT model \cite{sanh2019distilbert} on the IMDB sentiment classification dataset. During fine-tuning, only the adapter module is trained, while the base DistilBERT weights remain frozen.
    
    \item \textbf{Fine-tuning Mistral-7B on Irony:} 
    Mistral-7B \cite{jiang2023diego} is a 7-billion-parameter open-source language model featuring Grouped-Query Attention and Sliding Window Attention. We fine-tune it on the irony subset of TweetEval \cite{van2018semeval}, which contains 2.86k training samples and 784 test samples. This small dataset allows for rapid validation.
\end{itemize}

\paragraph{Baselines}  
We compare Q-RUN with two representative parameter-efficient fine-tuning methods: LoRA \cite{hu2022lora} and DoRA \cite{liu2024dora}.  These methods have demonstrated strong empirical performance in fine-tuning large language models efficiently, making them strong baselines for our evaluation.

\begin{itemize}
    \item \textbf{LoRA} (Low-Rank Adaptation) introduces low-rank decomposition matrices to the weights of pre-trained models, reducing the number of trainable parameters by injecting trainable low-rank matrices while freezing the original weights.
    \item \textbf{DoRA} (Weight-Decomposed Low-Rank Adaptation) improves LoRA by decoupling pre-trained weights into magnitude and direction components. 

\end{itemize}

For integrating Q-RUN into the LoRA layer, the $f_0$ layer in the original Q-RUN design can be omitted, since the matrix $A$ in LoRA already reduces the input dimensionality to $r$. The resulting $r$-dimensional vector is then passed through the Q-RUN layer, which outputs an $r$-dimensional feature. This feature is subsequently projected back to the original hidden dimension through the matrix $B$. For large language models, however, directly replacing the LoRA structure with Q-RUN may introduce excessive structural modifications and potentially disrupt the rich prior knowledge embedded in the model. To address this, we adopt a modulation strategy in which the outputs of the original LoRA branch and the Q-RUN branch are combined, similar to the approach of \cite{kong2025quantum}.

\paragraph{Results}  
Table~\ref{tab:lora} summarizes the results of Q-RUN and various parameter-efficient fine-tuning methods. Across all settings, Q-RUN consistently outperforms other baselines in terms of test accuracy while maintaining competitive training time and parameter efficiency.
On DistilBERT, Q-RUN with rank $r=8$ achieves the best test accuracy of 90.73\%, surpassing both LoRA and DoRA at the same rank. Even with $r=4$, Q-RUN reaches 90.40\%, already competitive with DoRA ($r=8$) and significantly better than LoRA ($r=4,r=8$). Notably, Q-RUN requires only 0.151M trainable parameters, which is less than 0.23\% of full fine-tuning (66.96M), demonstrating strong parameter efficiency.
On Mistral-7b, Q-RUN achieves the highest test accuracy of 84.18\% ($r=8$), outperforming all other methods, whereas direct inference with Mistral-7b alone yields only 62.12\%. This demonstrates that the inductive bias introduced by Q-RUN has a positive impact even when applied to a model with 7B parameters. In terms of training time, Q-RUN remains comparable to LoRA and DoRA, with only a marginal overhead introduced by the additional element-wise MLP, and in some cases, it even trains faster than DoRA.
Overall, these results confirm that Q-RUN achieves a superior trade-off between performance and parameter efficiency compared to LoRA and DoRA.

\subsubsection{Image Classification}

Finally, we further evaluate Q-RUN on the Image Classification (IC) task, a fundamental benchmark in computer vision. This task requires assigning each input image to one of several predefined categories.

\begin{table}[t]
\centering
\renewcommand{\arraystretch}{1.1}
\caption{Comparison of model performance and parameter counts on image classification benchmarks. The reported metric is accuracy. Higher values indicate better performance.}
\begin{tabular}{@{}ccccccc@{}}
\toprule
Model & \#Params & MNIST & MNIST-Modified & Fashion-MNIST & Fashion-MNIST-Corrupted \\
\midrule
FAN & 1.86M & 99.67 & 94.23 & 94.47 & 88.82 \\
CNN & 1.86M & 99.63 & \cellcolor{gray!20}94.52 & 94.15 & 88.61 \\
\hline
Q-RUN & \textbf{1.36M} & \cellcolor{gray!20}99.68 & 94.34 & \cellcolor{gray!20}94.57 & \cellcolor{gray!20}88.92 \\
\bottomrule
\end{tabular}
\label{tb_ic}
\end{table}

\textit{Datasets.}
The datasets used include standard benchmarks for image classification and their commonly adopted variants, detailed as follows. 
\begin{itemize}
    \item \textbf{MNIST:} A classic benchmark dataset of handwritten digits (0–9), consisting of 60,000 training and 10,000 test grayscale images of size $28 \times 28$.
    
    \item \textbf{MNIST-Modified:} A modified version of MNIST where digits are blended with color patches from natural images, introducing background textures and RGB information to the original grayscale digits.
    
    \item \textbf{Fashion-MNIST:} A drop-in replacement for MNIST, containing grayscale images of clothing items (e.g., shirts, shoes, bags) with the same size and split as MNIST.
    
    \item \textbf{Fashion-MNIST-Corrupted:} A corrupted version of Fashion-MNIST, including various synthetic corruptions (e.g., noise, blur, pixelation, brightness changes), designed to evaluate model robustness under distribution shifts.
    \end{itemize}

\textit{Baselines.}
Our main baselines are CNNs and FAN. CNNs represent the standard architecture for image classification, using convolutional filters to extract local spatial features. For FAN and Q-RUN, we replace the fully connected layers in CNNs with the respective modules to assess their effectiveness as plug-and-play components for image recognition.

\textit{Results.}
Experimental results for the image classification task are presented in Table~\ref{tb_ic}. It is evident that both FAN and Q-RUN improve the performance of CNNs on MNIST, Fashion-MNIST, and Fashion-MNIST-Corrupted. This demonstrates that Fourier features are effective at capturing high-frequency edges and fine-grained details in images, enabling the model to improve classification accuracy. Notably, Q-RUN further enhances CNN performance compared to FAN while using fewer parameters than the original CNN. An interesting observation is that on MNIST-Modified, both FAN and Q-RUN show a decrease in performance. This is likely because MNIST-Modified overlays natural image color patches on the original digits, introducing substantial irrelevant high-frequency information. As a result, Fourier-based models may inadvertently fit irrelevant high-frequency noise, which leads to reduced accuracy. Nevertheless, Q-RUN still outperforms FAN, indicating its superior ability to extract meaningful features and mitigate the impact of irrelevant high-frequency noise.

%对于TSF任务，传统的LSTM、Mamba和Transformer在不同数据集上的表现互有胜负。而在将其中的FFN替换为FAN明显得到了提升，尤其还节省了可学习参数。相比FAN，尽管FANGated结构更复杂，但效果反而有所下降，这充分说明傅里叶序列拟合能力的引入对TSF是有意义的。Q-RUN与FAN类似，但引入的是DRQC启发的傅里叶序列拟合能力，这种能力更强、更适合建模高频信号，因此在时序预测中的全部数据集上取得了最好的表现，同时使用的参数也是最少的。对于LM任务,Mamba表现的略差，不如LSTM。而Transformer在大多数情况下表现更好，在替换FNN为FAN或FANGated后，明显性能得到了进一步的提升。而在替换为Q-RUN，它的性能在多数情况下都达到了几种方法中的最优。尤其是，替换为Q-RUN后，相比原始Transformer减少了将近一半的可学习参数。在IC中也观察到了类似的结果，Q-RUN 在大多数情况下都能够提升原始 CNN 的表现，同时使用了更少的参数。与当前最先进的 FAN 方法相比，Q-RUN 的改进更加显著。这些实验结果一方面说明了引入傅里叶序列特性对于改进人工智能模型的重要意义，另一方面也展示了 Q-RUN 作为量子启发方法的强大潜力。

\begin{figure}
    \centering
    \includegraphics[width=0.8\linewidth]{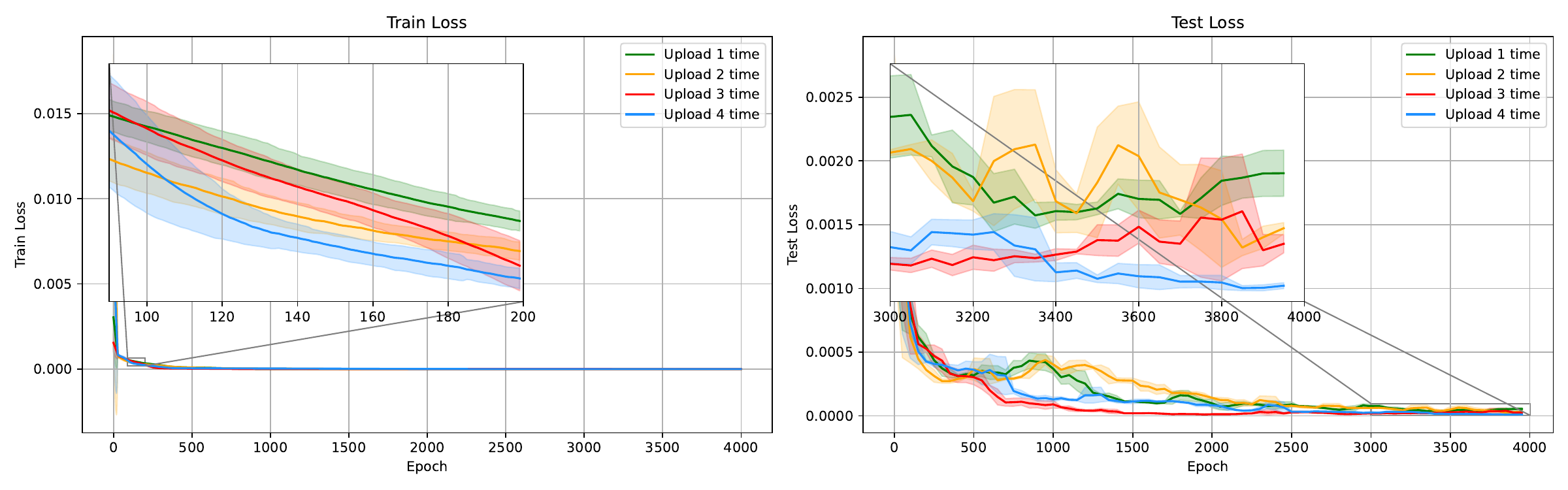}
\caption{Q-RUN’s approximation of DRQC with varying data re-uploading times. The left panel shows performance on the training set, and the right panel shows performance on the test set.}
    \label{ef2}
\end{figure}

\subsection{RQ4: Ablation Studies}

This subsection analyzes the impact of data re-uploads on the proposed Q-RUN model. It also reports Q-RUN’s computational efficiency and examines its scaling behavior.

\subsubsection{Impact of Data Re-Upload}

We analyze the impact of the number of data re-uploads on the modeling capability of Q-RUN. As shown in Figure~\ref{ef2}, building on the experimental setup from RQ1, we tracked the train and test loss across training epochs for Q-RUN with different numbers of data re-uploads. 
On the training set, we observe that in the region where the loss decreases rapidly, circuits with more data re-uploading times exhibit faster loss reduction. On the test set, we focus on the last 1,000 iterations. 
It is evident that increasing the number of re-uploading times from 1 to 2 results in relatively slow convergence, whereas 3 to 4 re-uploading times lead to significantly faster convergence and lower final loss values. These observations indicate that a higher number of data re-uploading times enhances the expressive power of the model.

% Figure~\ref{ef2}(b) further illustrates the performance of Q-RUN at different parameter scales, compared with several representative models. As the number of parameters increases, Q-RUN with depths 3 and 4 consistently improves its performance, surpassing both FAN and MLP by nearly two orders of magnitude on the symbolic regression task. These results demonstrate Q-RUN's strong function approximation ability and excellent scalability. See Section A5 of the Technical Appendix for further ablation studies, including training and inference time.

\begin{table}[t]
\centering
\caption{Training and inference time comparison across models. All experiments were conducted using a PyTorch implementation on an NVIDIA RTX 3060 GPU.}
\setlength{\tabcolsep}{4pt}
\small
\begin{tabular}{cccc}
\toprule
\textbf{Model} & \textbf{\# Params} & \textbf{Train Time (s)} & \textbf{Test Time (s)} \\
\midrule
RFF & 4225  & 6.46     & 0.0010   \\
Tanh    & 4353  & 7.08     & 0.0010   \\
ReLU    & 4353  & 7.18     & 0.0010   \\
SIREN          & 4353  & 8.28    & 0.0011   \\
FAN          & 4123  & 12.12    & 0.0015   \\
PWLNN        & 4369  & 23.25    & 0.0010   \\
FNO          & 3708  & 81.69    & 0.0131   \\
KAN          & 4352  & 3245.51  & 0.0377   \\
QIREN        & \multicolumn{3}{c}{\textit{Out of Memory}}         \\
\midrule
Q-RUN       & 3421  & 32.26    & 0.0020   \\
\bottomrule
\end{tabular}
\label{tab:model_timing}
\end{table}

\subsubsection{Efficiency Comparison}

We conducted systematic evaluations on a density estimation task using a dataset of 10,000 training samples and 256 testing samples. Except for KAN, all models were implemented with 3 layers and hidden widths ranging from 32 to 72, adjusted to ensure comparable total parameter counts. The Adam optimizer was used across all models, while KAN was trained using the LBFGS optimizer due to its design characteristics. For QIREN, memory overflow occurred when attempting to scale its hidden layers to a comparable size. Q-RUN employed 4 data re-uploading times with a layer width of 32, and the Element-wise Observable Module was configured with size [8, 32, 2]. All models were trained for 3,000 epochs under identical conditions.

Table~\ref{tab:model_timing} compares the training and inference time of various models under the same density estimation task and parameter scale. While Q-RUN is slower than standard MLP-based baselines due to its structured design, it is still significantly more efficient than recent specialized architectures, such as FNO and KAN. Notably, the state-of-the-art quantum model QIREN fails to process this task due to excessive memory consumption, indicating poor scalability. In contrast, our quantum-inspired Q-RUN runs stably and efficiently, highlighting its practical value for large-scale problems with reasonable computational overhead.

\begin{figure*}[t]
    \centering
    \includegraphics[width=1\linewidth]{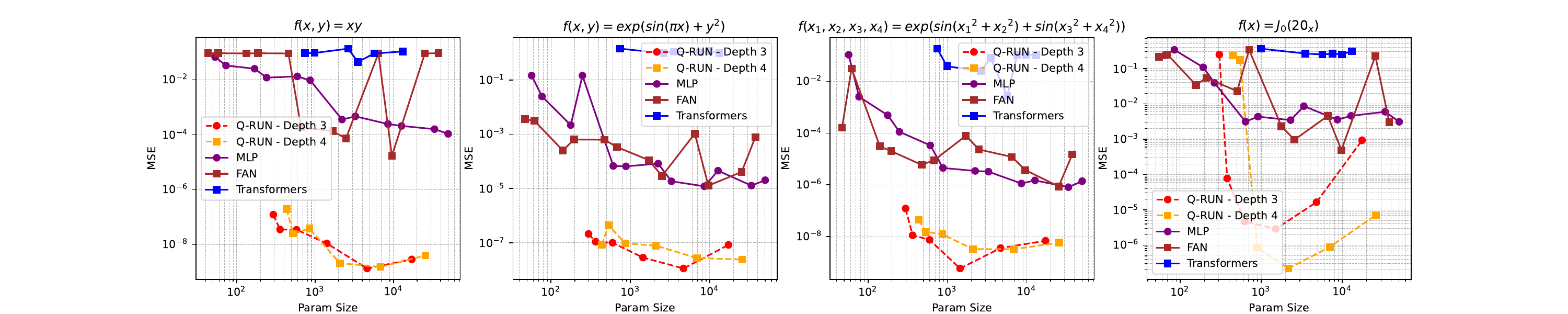}
\caption{The scalability comparison across models.}    \label{F_A5_2}
\end{figure*}

\subsubsection{Scalability Comparison}

We follow the experimental setup from FAN to evaluate the scalability of Q-RUN compared to several baseline models on four symbolic regression tasks. Each dataset contains 4,000 samples, with 1,000 used for testing. For MLP, FAN, and Q-RUN, we evaluate scalability with network depths in \([3, 4]\) and hidden sizes in \([4, 8, 16, 32, 64, 128]\). For the Transformer baseline, we vary the embedding dimension in \([4, 8, 12, 16]\).
All models are individually tuned based on their characteristics and optimized using the LBFGS optimizer. To accelerate training, Q-RUN is first pretrained with AdamW and then fine-tuned using LBFGS. 

The experimental results are shown in Figure~\ref{F_A5_2}.
On the first three datasets, Q-RUN achieves remarkably low MSE from the very beginning, outperforming FAN by 2 to 3 orders of magnitude. As the model size increases, its performance continues to improve. On the fourth dataset, however, Q-RUN exhibits a less ideal U-shaped trend. The MSE first decreases and then increases, indicating potential overfitting at larger capacities. Nevertheless, Q-RUN still surpasses other methods with similar parameter scales.

\section{Conclusion and Future Work}

In this paper, we show that QML can inspire the design of better AI models, even without access to quantum hardware. We embed the inductive bias of quantum models into classical neural networks, enabling Q-RUN to be theoretically viewed as a truncated Fourier series with enhanced Fourier expressiveness over standard methods. By relaxing certain quantum constraints, Q-RUN can serve as a practical alternative to standard fully connected layers. Across a wide range of AI tasks, Q-RUN achieves superior performance while using fewer learnable parameters. Our work offers a novel perspective on how to leverage potential quantum advantages in the absence of quantum hardware.

Future work can proceed along several directions. 
First, while we have demonstrated the ability of Q-RUN to approximate DRQC, an open question is whether Q-RUN can be extended to more general quantum circuit architectures to learn circuit behaviors and even emulate quantum outputs. This direction holds promise for establishing Q-RUN as a versatile surrogate for simulating a wider class of quantum models. 
Second, applying Q-RUN to parameter-efficient fine-tuning of large language models represents another promising avenue. By providing inductive bias comparable to that of quantum neural networks, Q-RUN may achieve competitive performance while remaining more practical to deploy on classical hardware. 
Third, it is worth exploring whether Q-RUN can serve as a classical twin of DRQC. In this case, efficiently training Q-RUN on classical devices could facilitate the extension of DRQC on real quantum hardware, thereby offering a cost-effective path toward advancing QML in practice.

%Bibliography
\bibliographystyle{unsrt}  
\bibliography{references}  

@article{yu2022power,
  title={Power and limitations of single-qubit native quantum neural networks},
  author={Yu, Zhan and Yao, Hongshun and Li, Mujin and Wang, Xin},
  journal={Advances in Neural Information Processing Systems},
  volume={35},
  pages={27810--27823},
  year={2022}
}

@article{schuld2021effect,
  title={Effect of data encoding on the expressive power of variational quantum-machine-learning models},
  author={Schuld, Maria and Sweke, Ryan and Meyer, Johannes Jakob},
  journal={Physical Review A},
  volume={103},
  number={3},
  pages={032430},
  year={2021},
  publisher={APS}
}

@inproceedings{
zhao2024quantum,
title={Quantum Implicit Neural Representations},
author={Jiaming Zhao and Wenbo Qiao and Peng Zhang and Hui Gao},
booktitle={Forty-first International Conference on Machine Learning},
year={2024},
url={https://openreview.net/forum?id=50vc4HBuKU}
}

@article{biamonte2017quantum,
  title={Quantum machine learning},
  author={Biamonte, Jacob and Wittek, Peter and Pancotti, Nicola and Rebentrost, Patrick and Wiebe, Nathan and Lloyd, Seth},
  journal={Nature},
  volume={549},
  number={7671},
  pages={195--202},
  year={2017},
  publisher={Nature Publishing Group UK London}
}

@article{perez2020data,
  title={Data re-uploading for a universal quantum classifier},
  author={P{\'e}rez-Salinas, Adri{\'a}n and Cervera-Lierta, Alba and Gil-Fuster, Elies and Latorre, Jos{\'e} I},
  journal={Quantum},
  volume={4},
  pages={226},
  year={2020},
  publisher={Verein zur F{\"o}rderung des Open Access Publizierens in den Quantenwissenschaften}
}

@article{xu2019frequency,
  title={Frequency principle: Fourier analysis sheds light on deep neural networks},
  author={Xu, Zhi-Qin John and Zhang, Yaoyu and Luo, Tao and Xiao, Yanyang and Ma, Zheng},
  journal={arXiv preprint arXiv:1901.06523},
  year={2019}
}

@article{sitzmann2020implicit,
  title={Implicit neural representations with periodic activation functions},
  author={Sitzmann, Vincent and Martel, Julien and Bergman, Alexander and Lindell, David and Wetzstein, Gordon},
  journal={Advances in neural information processing systems},
  volume={33},
  pages={7462--7473},
  year={2020}
}

@article{mitarai2018quantum,
  title={Quantum circuit learning},
  author={Mitarai, Kosuke and Negoro, Makoto and Kitagawa, Masahiro and Fujii, Keisuke},
  journal={Physical Review A},
  volume={98},
  number={3},
  pages={032309},
  year={2018},
  publisher={APS}
}

@article{tancik2020fourier,
  title={Fourier features let networks learn high frequency functions in low dimensional domains},
  author={Tancik, Matthew and Srinivasan, Pratul and Mildenhall, Ben and Fridovich-Keil, Sara and Raghavan, Nithin and Singhal, Utkarsh and Ramamoorthi, Ravi and Barron, Jonathan and Ng, Ren},
  journal={Advances in neural information processing systems},
  volume={33},
  pages={7537--7547},
  year={2020}
}

@article{kovachki2021neural,
   author    = {Nikola B. Kovachki and
                  Zongyi Li and
                  Burigede Liu and
                  Kamyar Azizzadenesheli and
                  Kaushik Bhattacharya and
                  Andrew M. Stuart and
                  Anima Anandkumar},
   title     = {Neural Operator: Learning Maps Between Function Spaces},
   journal   = {CoRR},
   volume    = {abs/2108.08481},
   year      = {2021},
}

@article{tao2022piecewise,
  title={Piecewise linear neural networks and deep learning},
  author={Tao, Qinghua and Li, Li and Huang, Xiaolin and Xi, Xiangming and Wang, Shuning and Suykens, Johan AK},
  journal={Nature Reviews Methods Primers},
  volume={2},
  number={1},
  pages={42},
  year={2022},
  publisher={Nature Publishing Group UK London}
}

@inproceedings{qiao2025quantum,
  title={Quantum Time-index Models with Reservoir for Time Series Forecasting},
  author={Qiao, Wenbo and Zhao, Jiaming and Zhang, Peng},
  booktitle={Proceedings of the 31st ACM SIGKDD Conference on Knowledge Discovery and Data Mining V. 1},
  pages={1173--1184},
  year={2025}
}

@article{lecun2015deep,
  title={Deep learning},
  author={LeCun, Yann and Bengio, Yoshua and Hinton, Geoffrey},
  journal={nature},
  volume={521},
  number={7553},
  pages={436--444},
  year={2015},
  publisher={Nature Publishing Group UK London}
}

@inproceedings{rahaman2019spectral,
  title={On the spectral bias of neural networks},
  author={Rahaman, Nasim and Baratin, Aristide and Arpit, Devansh and Draxler, Felix and Lin, Min and Hamprecht, Fred and Bengio, Yoshua and Courville, Aaron},
  booktitle={International conference on machine learning},
  pages={5301--5310},
  year={2019},
  organization={PMLR}
}

@article{thompson2020computational,
  title={The computational limits of deep learning},
  author={Thompson, Neil C and Greenewald, Kristjan and Lee, Keeheon and Manso, Gabriel F and others},
  journal={arXiv preprint arXiv:2007.05558},
  volume={10},
  year={2020}
}

@inproceedings{woo2023learning,
  title={Learning deep time-index models for time series forecasting},
  author={Woo, Gerald and Liu, Chenghao and Sahoo, Doyen and Kumar, Akshat and Hoi, Steven},
  booktitle={International Conference on Machine Learning},
  pages={37217--37237},
  year={2023},
  organization={PMLR}
}

@article{hua2024fourier,
  title={Fourier Position Embedding: Enhancing Attention's Periodic Extension for Length Generalization},
  author={Hua, Ermo and Jiang, Che and Lv, Xingtai and Zhang, Kaiyan and Ding, Ning and Sun, Youbang and Qi, Biqing and Fan, Yuchen and Zhu, Xuekai and Zhou, Bowen},
  journal={arXiv preprint arXiv:2412.17739},
  year={2024}
}

@article{dong2024fan,
  title={FAN: Fourier Analysis Networks},
  author={Yihong Dong and Ge Li and Yongding Tao and Xue Jiang and Kechi Zhang and Jia Li and Jing Su and Jun Zhang and Jingjing Xu},
  journal={arXiv preprint arXiv:2410.02675},
  year={2024}
}

@article{liu2024kan,
  title={Kan: Kolmogorov-arnold networks},
  author={Liu, Ziming and Wang, Yixuan and Vaidya, Sachin and Ruehle, Fabian and Halverson, James and Solja{\v{c}}i{\'c}, Marin and Hou, Thomas Y and Tegmark, Max},
  journal={arXiv preprint arXiv:2404.19756},
  year={2024}
}

@inproceedings{qiao2024quantum,
  title={Quantum Topic Model: Topic Modeling Using Variational Quantum Circuits},
  author={Qiao, Wenbo and Zhang, Peng and Zhao, Jiaming and Yang, Chang},
  booktitle={ICASSP 2024-2024 IEEE International Conference on Acoustics, Speech and Signal Processing (ICASSP)},
  pages={5895--5899},
  year={2024},
  organization={IEEE}
}

@article{coecke2020foundations,
  title={Foundations for near-term quantum natural language processing},
  author={Coecke, Bob and de Felice, Giovanni and Meichanetzidis, Konstantinos and Toumi, Alexis},
  journal={arXiv preprint arXiv:2012.03755},
  year={2020}
}

@inproceedings{chen2022quantum,
  title={Quantum long short-term memory},
  author={Chen, Samuel Yen-Chi and Yoo, Shinjae and Fang, Yao-Lung L},
  booktitle={ICASSP 2022-2022 IEEE International Conference on Acoustics, Speech and Signal Processing (ICASSP)},
  pages={8622--8626},
  year={2022},
  organization={IEEE}
}

@inproceedings{zhang2018end,
  title={End-to-end quantum-like language models with application to question answering},
  author={Zhang, Peng and Niu, Jiabin and Su, Zhan and Wang, Benyou and Ma, Liqun and Song, Dawei},
  booktitle={Proceedings of the AAAI conference on artificial intelligence},
  volume={32},
  number={1},
  year={2018}
}

@inproceedings{qiao2024quantum2,
  title={A quantum-inspired matching network with linguistic theories for metaphor detection},
  author={Qiao, Wenbo and Zhang, Peng and Ma, ZengLai},
  booktitle={Proceedings of the 2024 joint international conference on computational linguistics, language resources and evaluation (LREC-cOLING 2024)},
  pages={1435--1445},
  year={2024}
}

@inproceedings{fan2024quantum,
  title={Quantum-inspired neural network with runge-kutta method},
  author={Fan, Zipeng and Zhang, Jing and Zhang, Peng and Lin, Qianxi and Gao, Hui},
  booktitle={Proceedings of the AAAI Conference on Artificial Intelligence},
  volume={38},
  number={16},
  pages={17977--17984},
  year={2024}
}

@inproceedings{jiang2020quantum,
  title={A quantum interference inspired neural matching model for ad-hoc retrieval},
  author={Jiang, Yongyu and Zhang, Peng and Gao, Hui and Song, Dawei},
  booktitle={Proceedings of the 43rd international ACM SIGIR conference on research and development in information retrieval},
  pages={19--28},
  year={2020}
}

@inproceedings{zhang2018quantum,
  title={A quantum many-body wave function inspired language modeling approach},
  author={Zhang, Peng and Su, Zhan and Zhang, Lipeng and Wang, Benyou and Song, Dawei},
  booktitle={Proceedings of the 27th ACM international conference on information and knowledge management},
  pages={1303--1312},
  year={2018}
}

@article{jones2019quest,
  title={QuEST and high performance simulation of quantum computers},
  author={Jones, Tyson and Brown, Anna and Bush, Ian and Benjamin, Simon C},
  journal={Scientific reports},
  volume={9},
  number={1},
  pages={10736},
  year={2019},
  publisher={Nature Publishing Group UK London}
}

@article{gu2023mamba,
  title={Mamba: Linear-time sequence modeling with selective state spaces},
  author={Gu, Albert and Dao, Tri},
  journal={arXiv preprint arXiv:2312.00752},
  year={2023}
}

@article{vaswani2017attention,
  title={Attention is all you need},
  author={Vaswani, Ashish and Shazeer, Noam and Parmar, Niki and Uszkoreit, Jakob and Jones, Llion and Gomez, Aidan N and Kaiser, {\L}ukasz and Polosukhin, Illia},
  journal={Advances in neural information processing systems},
  volume={30},
  year={2017}
}

@inproceedings{devlin2019bert,
  title={Bert: Pre-training of deep bidirectional transformers for language understanding},
  author={Devlin, Jacob and Chang, Ming-Wei and Lee, Kenton and Toutanova, Kristina},
  booktitle={Proceedings of the 2019 conference of the North American chapter of the association for computational linguistics: human language technologies, volume 1 (long and short papers)},
  pages={4171--4186},
  year={2019}
}

@article{hu2022lora,
  title={Lora: Low-rank adaptation of large language models.},
  author={Hu, Edward J and Shen, Yelong and Wallis, Phillip and Allen-Zhu, Zeyuan and Li, Yuanzhi and Wang, Shean and Wang, Lu and Chen, Weizhu and others},
  journal={ICLR},
  volume={1},
  number={2},
  pages={3},
  year={2022}
}

@article{hornik1991approximation,
  title={Approximation capabilities of multilayer feedforward networks},
  author={Hornik, Kurt},
  journal={Neural networks},
  volume={4},
  number={2},
  pages={251--257},
  year={1991},
  publisher={Elsevier}
}

@article{liu2024quantum,
  title={A quantum circuit-based compression perspective for parameter-efficient learning},
  author={Liu, Chen-Yu and Yang, Chao-Han Huck and Goan, Hsi-Sheng and Hsieh, Min-Hsiu},
  journal={arXiv preprint arXiv:2410.09846},
  year={2024}
}

@article{kong2025quantum,
  title={Quantum-enhanced llm efficient fine tuning},
  author={Kong, Xiaofei and Li, Lei and Chen, Zhaoyun and Xue, Cheng and Xu, Xiaofan and Liu, Huanyu and Wu, Yuchun and Fang, Yuan and Fang, Han and Chen, Kejiang and others},
  journal={arXiv preprint arXiv:2503.12790},
  year={2025}
}

@article{chen2024quanta,
  title={Quanta: Efficient high-rank fine-tuning of llms with quantum-informed tensor adaptation},
  author={Chen, Zhuo and Dangovski, Rumen and Loh, Charlotte and Dugan, Owen and Luo, Di and Soljacic, Marin},
  journal={Advances in Neural Information Processing Systems},
  volume={37},
  pages={92210--92245},
  year={2024}
}

@article{sanh2019distilbert,
  title={DistilBERT, a distilled version of BERT: smaller, faster, cheaper and lighter},
  author={Sanh, Victor and Debut, Lysandre and Chaumond, Julien and Wolf, Thomas},
  journal={arXiv preprint arXiv:1910.01108},
  year={2019}
}

@inproceedings{liu2024dora,
  title={Dora: Weight-decomposed low-rank adaptation},
  author={Liu, Shih-Yang and Wang, Chien-Yi and Yin, Hongxu and Molchanov, Pavlo and Wang, Yu-Chiang Frank and Cheng, Kwang-Ting and Chen, Min-Hung},
  booktitle={Forty-first International Conference on Machine Learning},
  year={2024}
}

@article{van2014scikit,
  title={scikit-image: image processing in Python},
  author={Van der Walt, Stefan and Sch{\"o}nberger, Johannes L and Nunez-Iglesias, Juan and Boulogne, Fran{\c{c}}ois and Warner, Joshua D and Yager, Neil and Gouillart, Emmanuelle and Yu, Tony},
  journal={PeerJ},
  volume={2},
  pages={e453},
  year={2014},
  publisher={PeerJ Inc.}
}

@article{le2025symmetry,
  title={Symmetry-invariant quantum machine learning force fields},
  author={Le, Isabel Nha Minh and Kiss, Oriel and Schuhmacher, Julian and Tavernelli, Ivano and Tacchino, Francesco},
  journal={New Journal of Physics},
  volume={27},
  number={2},
  pages={023015},
  year={2025},
  publisher={IOP Publishing}
}

@article{lu2020quantum,
  title={Quantum adversarial machine learning},
  author={Lu, Sirui and Duan, Lu-Ming and Deng, Dong-Ling},
  journal={Physical Review Research},
  volume={2},
  number={3},
  pages={033212},
  year={2020},
  publisher={APS}
}

@article{dong2008quantum,
  title={Quantum reinforcement learning},
  author={Dong, Daoyi and Chen, Chunlin and Li, Hanxiong and Tarn, Tzyh-Jong},
  journal={IEEE Transactions on Systems, Man, and Cybernetics, Part B (Cybernetics)},
  volume={38},
  number={5},
  pages={1207--1220},
  year={2008},
  publisher={IEEE}
}

@article{cong2019quantum,
  title={Quantum convolutional neural networks},
  author={Cong, Iris and Choi, Soonwon and Lukin, Mikhail D},
  journal={Nature Physics},
  volume={15},
  number={12},
  pages={1273--1278},
  year={2019},
  publisher={Nature Publishing Group UK London}
}

@article{li2024quantum,
  title={Quantum gated recurrent neural networks},
  author={Li, Yanan and Wang, Zhimin and Xing, Ruipeng and Shao, Changheng and Shi, Shangshang and Li, Jiaxin and Zhong, Guoqiang and Gu, Yongjian},
  journal={IEEE Transactions on Pattern Analysis and Machine Intelligence},
  year={2024},
  publisher={IEEE}
}

@article{pan2022simulation,
  title={Simulation of quantum circuits using the big-batch tensor network method},
  author={Pan, Feng and Zhang, Pan},
  journal={Physical Review Letters},
  volume={128},
  number={3},
  pages={030501},
  year={2022},
  publisher={APS}
}

@article{shi2024pretrained,
  title={Pretrained quantum-inspired deep neural network for natural language processing},
  author={Shi, Jinjing and Chen, Tian and Lai, Wei and Zhang, Shichao and Li, Xuelong},
  journal={IEEE Transactions on Cybernetics},
  volume={54},
  number={10},
  pages={5973--5985},
  year={2024},
  publisher={IEEE}
}

@article{yan2021quantum,
  title={Quantum probability-inspired graph neural network for document representation and classification},
  author={Yan, Peng and Li, Linjing and Jin, Miaotianzi and Zeng, Daniel},
  journal={Neurocomputing},
  volume={445},
  pages={276--286},
  year={2021},
  publisher={Elsevier}
}

@inproceedings{li2021quantum,
  title={Quantum-inspired neural network for conversational emotion recognition},
  author={Li, Qiuchi and Gkoumas, Dimitris and Sordoni, Alessandro and Nie, Jian-Yun and Melucci, Massimo},
  booktitle={Proceedings of the AAAI conference on artificial intelligence},
  volume={35},
  number={15},
  pages={13270--13278},
  year={2021}
}

@inproceedings{xiong2024node2ket,
  title={Node2ket: Efficient high-dimensional network embedding in quantum hilbert space},
  author={Xiong, Hao and Tang, Yehui and He, Yunlin and Tan, Wei and Yan, Junchi},
  booktitle={The Twelfth International Conference on Learning Representations},
  year={2024}
}

@inproceedings{yan2022towards,
  title={Towards a native quantum paradigm for graph representation learning: A sampling-based recurrent embedding approach},
  author={Yan, Ge and Tang, Yehui and Yan, Junchi},
  booktitle={Proceedings of the 28th ACM SIGKDD Conference on Knowledge Discovery and Data Mining},
  pages={2160--2168},
  year={2022}
}

@article{bai2025aegk,
  title={AEGK: Aligned Entropic Graph Kernels Through Continuous-Time Quantum Walks},
  author={Bai, Lu and Cui, Lixin and Li, Ming and Ren, Peng and Wang, Yue and Zhang, Lichi and Yu, Philip S and Hancock, Edwin R},
  journal={IEEE Transactions on Knowledge and Data Engineering},
  year={2025},
  publisher={IEEE}
}

@article{koike2025quantum,
  title={Quantum-PEFT: Ultra parameter-efficient fine-tuning},
  author={Koike-Akino, Toshiaki and Tonin, Francesco and Wu, Yongtao and Wu, Frank Zhengqing and Candogan, Leyla Naz and Cevher, Volkan},
  journal={arXiv preprint arXiv:2503.05431},
  year={2025}
}

@article{anschuetz2023interpretable,
  title={Interpretable quantum advantage in neural sequence learning},
  author={Anschuetz, Eric R and Hu, Hong-Ye and Huang, Jin-Long and Gao, Xun},
  journal={PRX Quantum},
  volume={4},
  number={2},
  pages={020338},
  year={2023},
  publisher={APS}
}

@article{bruza2015quantum,
  title={Quantum cognition: a new theoretical approach to psychology},
  author={Bruza, Peter D and Wang, Zheng and Busemeyer, Jerome R},
  journal={Trends in cognitive sciences},
  volume={19},
  number={7},
  pages={383--393},
  year={2015},
  publisher={Elsevier}
}

@inproceedings{zhang2024quantum,
  title={Quantum interference model for semantic biases of glosses in word sense disambiguation},
  author={Zhang, Junwei and He, Ruifang and Guo, Fengyu and Liu, Chang},
  booktitle={Proceedings of the AAAI Conference on Artificial Intelligence},
  volume={38},
  number={17},
  pages={19551--19559},
  year={2024}
}

@article{nguyen2022evaluation,
  title={An evaluation of hardware-efficient quantum neural networks for image data classification},
  author={Nguyen, Tuyen and Paik, Incheon and Watanobe, Yutaka and Thang, Truong Cong},
  journal={Electronics},
  volume={11},
  number={3},
  pages={437},
  year={2022},
  publisher={MDPI}
}

@article{fan2024quantum2,
  title={Quantum-inspired language models based on unitary transformation},
  author={Fan, Zipeng and Zhang, Jing and Zhang, Peng and Lin, Qianxi and Li, Yizhe and Qian, Yuhua},
  journal={Information Processing \& Management},
  volume={61},
  number={4},
  pages={103741},
  year={2024},
  publisher={Elsevier}
}

@article{wang2025predictive,
  title={Predictive Performance of Deep Quantum Data Re-uploading Models},
  author={Wang, Xin and Tao, Han-Xiao and Wu, Re-Bing},
  journal={arXiv preprint arXiv:2505.20337},
  year={2025}
}

@inproceedings{van2018semeval,
  title={Semeval-2018 task 3: Irony detection in english tweets},
  author={Van Hee, Cynthia and Lefever, Els and Hoste, V{\'e}ronique},
  booktitle={Proceedings of The 12th International Workshop on Semantic Evaluation},
  pages={39--50},
  year={2018}
}

@article{jiang2023diego,
  title={Diego de las Casas},
  author={Jiang, Albert Q and Sablayrolles, Alexandre and Mensch, Arthur and Bamford, Chris and Chaplot, Devendra Singh},
  journal={Florian Bressand, Gianna Lengyel, Guillaume Lample, Lucile Saulnier, L{\'e}lio Renard Lavaud, Marie-Anne Lachaux, Pierre Stock, Teven Le Scao, Thibaut Lavril, Thomas Wang, Timoth{\'e}e Lacroix, and William El Sayed},
  pages={50--72},
  year={2023}
}

@article{bergholm2018pennylane,
  title={Pennylane: Automatic differentiation of hybrid quantum-classical computations},
  author={Bergholm, Ville and Izaac, Josh and Schuld, Maria and Gogolin, Christian and Ahmed, Shahnawaz and Ajith, Vishnu and Alam, M Sohaib and Alonso-Linaje, Guillermo and AkashNarayanan, B and Asadi, Ali and others},
  journal={arXiv preprint arXiv:1811.04968},
  year={2018}
}

@String{Computing = "Computing" }

@ArtifactSoftware{R,
    title = {R: A Language and Environment for Statistical Computing},
    author = {{R Core Team}},
    organization = {R Foundation for Statistical Computing},
    address = {Vienna, Austria},
    year = {2019},
    url = {https://www.R-project.org/},
}

@inproceedings{han2024quantum,
  title={Quantum cognition-inspired EEG-based recommendation via graph neural networks},
  author={Han, Jinkun and Li, Wei and Li, Yingshu and Cai, Zhipeng},
  booktitle={Proceedings of the 33rd ACM International Conference on Information and Knowledge Management},
  pages={778--788},
  year={2024}
}

@article{zhang2022complex,
  title={Complex-valued neural network-based quantum language models},
  author={Zhang, Peng and Hui, Wenjie and Wang, Benyou and Zhao, Donghao and Song, Dawei and Lioma, Christina and Simonsen, Jakob Grue},
  journal={ACM Transactions on Information Systems (TOIS)},
  volume={40},
  number={4},
  pages={1--31},
  year={2022},
  publisher={ACM New York, NY}
}

@inproceedings{wang2019qpin,
  title={QPIN: a quantum-inspired preference interactive network for E-commerce recommendation},
  author={Wang, Panpan and Li, Zhao and Zhang, Yazhou and Hou, Yuexian and Ge, Liangzhu},
  booktitle={Proceedings of the 28th ACM International Conference on Information and Knowledge Management},
  pages={2329--2332},
  year={2019}
}

@inproceedings{shokrollahi2023intersectional,
  title={Intersectional bias mitigation in pre-trained language models: A quantum-inspired approach},
  author={Shokrollahi, Omid},
  booktitle={Proceedings of the 32nd ACM International Conference on Information and Knowledge Management},
  pages={5181--5184},
  year={2023}
}

@inproceedings{tian2020qsan,
  title={QSAN: A quantum-probability based signed attention network for explainable false information detection},
  author={Tian, Tian and Liu, Yudong and Yang, Xiaoyu and Lyu, Yuefei and Zhang, Xi and Fang, Binxing},
  booktitle={Proceedings of the 29th ACM international conference on information \& knowledge management},
  pages={1445--1454},
  year={2020}
}

@inproceedings{chen2024hands,
  title={Hands-On Introduction to Quantum Machine Learning},
  author={Chen, Samuel Yen-Chi and Kim, Joongheon},
  booktitle={Proceedings of the 33rd ACM International Conference on Information and Knowledge Management},
  pages={5507--5510},
  year={2024}
}

@inproceedings{baek2023logarithmic,
  title={Logarithmic dimension reduction for quantum neural networks},
  author={Baek, Hankyul and Park, Soohyun and Kim, Joongheon},
  booktitle={Proceedings of the 32nd ACM International Conference on Information and Knowledge Management},
  pages={3738--3742},
  year={2023}
}

@inproceedings{ferrari2024using,
  title={Using and Evaluating Quantum Computing for Information Retrieval and Recommender Systems},
  author={Ferrari Dacrema, Maurizio and Pasin, Andrea and Cremonesi, Paolo and Ferro, Nicola},
  booktitle={Proceedings of the 47th International ACM SIGIR Conference on Research and Development in Information Retrieval},
  pages={3017--3020},
  year={2024}
}

@inproceedings{ferrari2022towards,
  title={Towards feature selection for ranking and classification exploiting quantum annealers},
  author={Ferrari Dacrema, Maurizio and Moroni, Fabio and Nembrini, Riccardo and Ferro, Nicola and Faggioli, Guglielmo and Cremonesi, Paolo},
  booktitle={Proceedings of the 45th International ACM SIGIR Conference on Research and Development in Information Retrieval},
  pages={2814--2824},
  year={2022}
}
\appendix

\newpage
\section{The proof of Claim 2}
\label{app1}

\noindent \textbf{Claim 2.}
\textit{Under optimal conditions, Q-RUN is capable of expressing a $d$-dimensional truncated Fourier series with a frequency spectrum containing up to $3^{dn}$ components.}

\begin{proof}[Proof]

We analyze Q-RUN through parameterized evolution and measurement. The model is expressed as:
\begin{equation}\label{Ae3}
f(\boldsymbol{x}) = \bra{0}^{\otimes nd} \, S(\boldsymbol{x})^\dagger \, \hat{\boldsymbol{O}} \, S(\boldsymbol{x}) \ket{0}^{\otimes nd}
\end{equation}
where $S(\boldsymbol{x})$ denotes the encoding operator, and $\hat{\boldsymbol{O}}$ represents a Hermitian observable. The encoding operator repeatedly uploads each element of \(\boldsymbol{x}\) into the circuit \(n\) times. We adopt a multi-qubit, single-layer encoding scheme, where each input element is embedded into an \(n\)-qubit subsystem. Accordingly, the overall encoding takes the form:
\begin{equation}
    S(\boldsymbol{x}) = e^{-i x_1 H} \otimes \cdots \otimes e^{-i x_d H},
    \label{eq:encoding_unitary}
\end{equation}
where each \(H\) is a local Hamiltonian acting on its corresponding \(n\)-qubit subsystem.

Since \( H \) can always be diagonalized as \( H := V \Sigma V^\dagger \), where \( \Sigma \) is a diagonal matrix and \( V \) is a unitary matrix. We observe that the left unitary \( V^\dagger \) can be absorbed into the observable \( \boldsymbol{O} \), while the right unitary \( V \) can be merged into the initial state \( \ket{0}^{\otimes nd} \). Thus, this is equivalent to preparing a fixed state \( \ket{\Gamma} \). This state can be expanded in the computational basis as:
\begin{equation}
    \ket{\Gamma} = \sum_{\boldsymbol{j}} \gamma_{\boldsymbol{j}} \ket{\boldsymbol{j}}, \quad \text{with } \boldsymbol{j} = (j_1, \dots, j_d),
    \label{e4}
\end{equation}
where each index \( j_k \in \{1, \dots, 2^n\} \) corresponds to the local Hilbert space of dimension \( 2^n \).

Since the surrounding unitary matrices \( V \) and \( V^\dagger \) have been absorbed, the central diagonal matrix \( \Sigma \) can be directly regarded as a diagonalized Hamiltonian.
\begin{equation}
    H := \Sigma = \mathrm{diag}(\lambda_1, \dots, \lambda_{2^n}),
    \label{eq:diagonal_hamiltonians}
\end{equation}
so that the overall encoding $S(\boldsymbol{x})$ is also diagonal, with elements:
\begin{equation}
    [S(\boldsymbol{x})]_{\boldsymbol{j},\boldsymbol{j}} = \exp\left(-i \boldsymbol{x} \cdot \boldsymbol{\lambda}_{\boldsymbol{j}}\right),
    \label{e5}
\end{equation}
where the spectral vector $\boldsymbol{\lambda}_{\boldsymbol{j}} := (\lambda_{j_1}, \dots, \lambda_{j_d})$ collects eigenvalues corresponding to the basis state $|\boldsymbol{j}\rangle$.

By substituting Eq. (\ref{e4}) and Eq. (\ref{e5}) into Eq. (\ref{Ae3}), the output of Q-RUN becomes:
\begin{equation}
f(\boldsymbol{x}) = \sum_{\boldsymbol{j},\boldsymbol{k}} \gamma_{\boldsymbol{j}}^* \gamma_{\boldsymbol{k}}\hat{\mathbf{O}}_{\boldsymbol{j},\boldsymbol{k}} \, e^{i \boldsymbol{x} \cdot (\boldsymbol{\lambda_k} - \boldsymbol{\lambda_j})}
= \sum_{\boldsymbol{\omega} \in \Omega} c_{\boldsymbol{\omega}} e^{i \boldsymbol{\omega} \cdot \boldsymbol{x}},
\end{equation}
which is a multivariate Fourier-type expansion. The set of available frequencies $\Omega$ is dictated by the pairwise differences of spectral vectors $\boldsymbol{\lambda_k} - \boldsymbol{\lambda_j}$, while the amplitude of each Fourier component is governed by the parameters $\gamma$ and entries of the observable $\hat{\mathbf{O}}$.

Furthermore, we aim to characterize the capability of Q-RUN to approximate Fourier series, which is inherently determined by the spectrum of frequencies it can represent. Specifically, Q-RUN applies learnable parameter vectors \( w = [w_1, \ldots, w_n] \) to each element of the input vector \( \boldsymbol{x} \), followed by $R_y$ gates to encode the scaled input. This procedure yields a specific $n$-qubit Hamiltonian:
\[
H^* = \sum_{i=1}^n \frac{w_i}{2} \mathrm{Y}^{(i)},
\]
where ${Y}^{(i)} := (\otimes_{j=1}^{i-1} I) \otimes {Y} \otimes (\otimes_{j=i+1}^{N} \mathrm{I})$. ${Y}$ denoting the Pauli-$Y$ operator and $\mathrm{I}$ being the $2 \times 2$ identity matrix.

The eigenvalues of $H^*$ determine the available frequency components $\boldsymbol{{\lambda^*}_i}$ in Q-RUN. 
Since the eigenvalues of the Pauli-Y gate are known to be \(\lambda = \pm 1\), it naturally follows that each time data is uploaded, the resulting operator \(H^*\) can produce the following eigenvalues:
\begin{equation}\label{e11}
\begin{aligned}
&\text{When} \quad H^* \in \mathbb{C}^{2^1 \times 2^1}, \\
&\boldsymbol{{\lambda^*}_i} \in \left\{ \boldsymbol{\lambda_k - \lambda_j} \;\middle|\;\boldsymbol{\lambda_k}, \boldsymbol{\lambda_j} \in \left\{ -\frac{w_{1}}{2}, +\frac{w_{1}}{2} \right\} \right\}, \\
&\boldsymbol{{\lambda^*}_i}\in \Omega^{(1)} = \left\{ -w_{1},\; 0,\; +w_{1} \right\}; \\
&\text{When} \quad H^* \in \mathbb{C}^{2^2 \times 2^2}, \\
&\boldsymbol{{\lambda^*}_i} \in \left\{ \boldsymbol{\lambda_k - \lambda_j} \;\middle|\; \boldsymbol{\lambda_k}, \boldsymbol{\lambda_j} \in \left\{ \pm\frac{w_{1}}{2},\; \pm\frac{w_{2}}{2} \right\} \right\}, \\
&\boldsymbol{{\lambda^*}_i}\in \Omega^{(2)} = \{ 
-w_{2} - w_{1},\; -w_{2},\; -w_{2} + w_{1},\; -w_{1},\; 0, 
 w_{1},\; w_{2} - w_{1},\; w_{2},\; w_{2} + w_{1} \}; \\
&\vdots \\
&\text{When} \quad H^* \in \mathbb{C}^{2^n \times 2^n}, \boldsymbol{{\lambda^*}_i} \in \Omega^{(n)} = \left\{ 
\Omega^{(n-1)} - w_{n},\; 
\Omega^{(n-1)},\; 
\Omega^{(n-1)} + w_{n}
\right\}.
\end{aligned}
\end{equation}

Therefore, for a Q-RUN model with \(n\) re-uploading times and \(d\)-dimensional input, the spectrum that can be covered is:
\begin{equation}\label{e12}
\begin{aligned}
&\left\{ \boldsymbol{\lambda_k} - \boldsymbol{\lambda_j}\right\}_{\boldsymbol{k}, \boldsymbol{j}} 
= \left\{  {\boldsymbol{{\lambda^*}_i}} \;\middle|\; \boldsymbol{{\lambda^*}_i} \in \Omega_i^{(n)} \right\}_{i=1}^d, \text{where} \quad \Omega^{(n)}_i = \left\{ \Omega^{(n-1)}_i - w_{n},\; \Omega^{(n-1)}_i,\; \Omega^{(q-1)}_i + w_{n} \right\}, \Omega^{(1)}_i = \left\{ -w_{1},\; 0,\; w_{1} \right\}.
\end{aligned}
\end{equation}
Since the parameter vector \( \boldsymbol{w} \) is learnable, a well-trained \( \boldsymbol{w} \) can ensure that the resulting frequency components are non-degenerate. In the best case, the number of expressible frequencies can reach:
\[
\left| \left\{ \boldsymbol{\lambda}_{\boldsymbol{k}} - \boldsymbol{\lambda}_{\boldsymbol{j}} \right\} \right| = 3^{d n}.
\]

In QML models, the number of learnable parameters typically scales with the number of qubits, i.e., it is on the order of \( \mathcal{O}(nd) \). Q-RUN can potentially cover an exponential number of frequency components with respect to its linearly growing number of parameters, indicating that its Fourier fitting capacity may surpass that of classical methods and enabling it to approximate high-frequency functions using significantly fewer parameters.
\end{proof}

\end{document}